\documentclass[journal=jctcce,manuscript=article]{achemso}

\usepackage{graphicx}
\usepackage{bm}
\usepackage{physics}
\usepackage{algorithm}
\usepackage{algpseudocode}
\usepackage{tabularx}
\usepackage{array}
\usepackage{hyperref}
\usepackage{tikz}
\usepackage{braket}
\usepackage{booktabs}
\usepackage{amsthm}
\usepackage{amsmath}
\usepackage{amssymb}
\usetikzlibrary{arrows.meta, positioning, backgrounds, fit}

\newtheorem{theorem}{Theorem}
\newtheorem{proposition}{Proposition}

\author{Sk Mujaffar Hossain}
\affiliation{Indo-Korea Science and Technology Center (IKST),
  Bengaluru 560064, India}

\author{Satadeep Bhattacharjee}
\email{s.bhattacharjee@ikst.res.in}
\affiliation{Indo-Korea Science and Technology Center (IKST),
  Bengaluru 560064, India}

\title{Optimal Filtered Spectral Projection for Quantum Principal
Component Analysis}

\begin{document}

\begin{abstract}
Quantum principal component analysis (qPCA) is commonly formulated as the
extraction of eigenvalues and eigenvectors of a covariance-encoded density
operator. Yet in many qPCA settings the practical goal is simpler: projection
onto the dominant spectral subspace. Here we introduce a projection-first
framework, the Filtered Spectral Projection Algorithm (FSPA), which bypasses
explicit eigenvalue estimation while preserving the relevant spectral
structure. FSPA amplifies any nonzero warm-start overlap with the leading
subspace and remains robust in small-gap and near-degenerate regimes, without
artificial symmetry breaking in the absence of bias. We show that FSPA achieves
an oracle complexity $\mathcal{O}((\log(1/\epsilon)+\log(1/|a_1|^2))/\log(\lambda_1/\lambda_2))$,which is tight by a matching lower bound, establishing it as an\emph{optimal} projection primitive. We derive a convergence rate for
degenerate spectra, give a circuit resource analysis with $n+\mathcal{O}(1)$
qubit overhead independent of system dimension, and extend the method to
threshold spectral projection, Threshold-FSPA, which converges in
$\mathcal{O}(\log(1/\epsilon))$ calls when the threshold lies between
eigenvalues. In the density matrix exponentiation access model, FSPA gives an
exponential copy-complexity advantage over classical methods. For classical
datasets, we show that for amplitude-encoded centered data the ensemble density
matrix $\rho=\sum_i p_i|\psi_i\rangle\langle\psi_i|$ equals the covariance
matrix. Numerical tests on chemistry density matrices, noisy circuit outputs,
Breast Cancer Wisconsin, handwritten Digits, and 1--4-qubit scalability confirm
the theory. A minimal Qiskit implementation validates magnitude invariance,
signal amplification, and no spurious symmetry breaking. These results
establish FSPA as an optimal and deployable quantum spectral projection
primitive.
\end{abstract}


\section{Introduction}

Principal component analysis (PCA) is among the most widely used primitives
in classical data analysis, underpinning dimensionality reduction, covariance
estimation, and feature extraction across scientific computing and machine
learning.\cite{golub2013matrix} In the quantum setting, PCA acquires new
significance: a quantum state can encode high-dimensional data through amplitude
encoding, and the covariance structure of an ensemble of such states is captured
by a density operator $\rho$ that is both the object of analysis and a
computational resource. Quantum principal component analysis (qPCA) is one of
the foundational ideas in quantum machine learning (QML) because it connects
quantum linear-algebra methods to dimensionality reduction, feature extraction,
and covariance-structure analysis in high-dimensional data.\cite{lloyd2014quantum,lloyd2013quantum,gordon2022covariance,he2022low,lin2019improved}
Its importance is both conceptual and algorithmic: the seminal
Lloyd--Mohseni--Rebentrost construction shows that copies of a density matrix
can be used to implement $e^{-i\rho t}$ and, via phase estimation, extract
principal eigendirections in quantum form,\cite{lloyd2014quantum} reframing
quantum states as active resources in their own analysis.

Despite its foundational role, qPCA as originally formulated faces two distinct
and often conflated limitations. The first is \emph{spectral gap dependence}:
resolving nearby eigenvalues requires phase estimation time
$t \sim 1/(\lambda_1 - \lambda_2)$, imposing fundamental lower bounds on
convergence rates.\cite{lloyd2014quantum} The second, more subtle limitation
is \emph{eigenvalue magnitude dependence}: even when the spectral gap is fixed,
algorithms based on phase encoding fail when the absolute scale of eigenvalues
is small, because the phase signal $e^{-i\lambda t}$ becomes indistinguishable
from the identity as $\lambda \to 0$ at finite evolution time.\cite{nghiem2025new}
Nghiem et al.\ demonstrated rigorously that this magnitude failure mode renders
standard qPCA ineffective even when the spectral ordering is perfectly
preserved.\cite{nghiem2025new} A third consideration, highlighted by Tang's
dequantization result,\cite{tang2021} is that complexity claims for qPCA must
be interpreted carefully with respect to the data-access model. Together, these
points motivate a sharper formulation of the task before choosing the spectral
primitive.

Our central observation is that \emph{in most practical qPCA settings, the
operative target is spectral projection, not eigenvalue estimation}. When
leading eigenvalues are degenerate or nearly degenerate --- as occurs in quantum
chemistry, quantum state tomography, and covariance-dominated learning tasks
--- individual eigenvectors are not uniquely defined and are highly sensitive to
perturbations. The dominant invariant subspace, by contrast, is stable and
physically meaningful. A projection-first algorithm should therefore amplify
overlap with this subspace without requiring eigenvalue resolution, and without
failing when the global eigenvalue scale is small.

We introduce the \emph{Filtered Spectral Projection Algorithm} (FSPA), an
adaptive filter-and-renormalize iteration that amplifies dominant spectral
support without explicitly encoding eigenvalues. At the state-update level,
FSPA is equivalent to normalized power iteration with an adaptive schedule;
we therefore do not claim improved gap-scaling asymptotics over that
baseline.\cite{golub2013matrix} The contribution is instead task-level: a
projection-first quantum spectral primitive with built-in magnitude invariance
and explicit handling of degenerate dominant spectra, naturally interpretable
in block-encoding/QSVT language.\cite{nghiem2025new,gilyen2019quantum} FSPA
is not intended to outperform classical power iteration in terms of asymptotic
complexity. Its role is to serve as a stable quantum spectral primitive within
circuits that already operate on quantum-encoded states --- for instance, as a
subroutine in a larger quantum algorithm where $\rho$ is accessed via
block-encoding rather than as an explicit classical matrix. In this setting,
the renormalization step and magnitude invariance are properties of the quantum
circuit itself, not of a classical computation.

Our main theoretical contributions establish a complete picture of FSPA's
complexity. \textbf{(i)} FSPA is invariant under uniform spectral rescaling,
directly eliminating the failure mode of Nghiem et al. \textbf{(ii)} FSPA
achieves oracle complexity
$\mathcal{O}((\log(1/\epsilon)+\log(1/|a_1|^2))/ \log(\lambda_1/\lambda_2))$,
and this bound is \emph{tight}: a matching lower bound proves FSPA is an
optimal projection primitive. \textbf{(iii)} In degenerate settings, FSPA
converges to the dominant invariant subspace with a quantitative rate governed
by the gap between the subspace and the rest of the spectrum. \textbf{(iv)} A
circuit resource analysis establishes $n + \mathcal{O}(1)$ qubit overhead
independent of dimension. \textbf{(v)} In the density matrix exponentiation
model, FSPA achieves exponential copy-complexity advantage over classical
methods. \textbf{(vi)} Threshold-FSPA extends the primitive to
eigenvalue-threshold subspaces. Numerical experiments on quantum chemistry
1-RDMs, noisy circuits, and a 1--4 qubit scalability study confirm all
predictions.


\section{Related Work}
\label{sec:related}

\textit{Quantum phase estimation and qPCA.}---
Quantum phase estimation (QPE)\cite{kitaev1995quantum} is the foundational
algorithm for eigenvalue extraction. Lloyd et al.\cite{lloyd2014quantum} applied
QPE to density matrix exponentiation to obtain the first qPCA algorithm. Nghiem
et al.\cite{nghiem2025new} rigorously identified eigenvalue magnitude dependence
as a distinct failure mode: small absolute eigenvalues cause phase signal
collapse even at fixed spectral ordering. FSPA is designed to eliminate this
failure mode.

\textit{QSVT and block-encoding.}---
The quantum singular value transformation (QSVT) framework of Gily\'{e}n et
al.\cite{gilyen2019quantum} provides a unifying language for quantum linear
algebra, showing that any polynomial function of a block-encoded operator can
be implemented with near-optimal query complexity. Low and Chuang\cite{low2019hamiltonian}
developed Hamiltonian simulation by qubitization, which underlies many
block-encoding constructions. FSPA is naturally interpretable within QSVT:
each doubling round implements a low-degree polynomial spectral filter, and
the adaptive schedule generates increasing polynomial degrees. Unlike standard
QSVT constructions based on a fixed precompiled polynomial, FSPA employs an
adaptive schedule with renormalization, which enables magnitude invariance.

\textit{Dequantization.}---
Tang\cite{tang2021} showed that quantum-inspired classical algorithms can match
qPCA performance under the quantum-sampling access model, demonstrating that
part of qPCA's apparent exponential advantage arises from the access model.
This result applies to the sampling access model and does not affect FSPA's
exponential advantage in the density matrix exponentiation model, where $\rho$
is available only as quantum copies.

\textit{Classical subspace iteration.}---
Classical power iteration and subspace iteration\cite{golub2013matrix} are the
direct classical analogues of FSPA. Krylov subspace methods\cite{saad2011numerical}
achieve faster convergence for certain spectral distributions but require storing
an orthogonalized basis of increasing dimension, which is not straightforward in
the quantum setting. FSPA trades this faster convergence for the simplicity and
block-encoding compatibility required by quantum circuits.

\textit{Quantum chemistry.}---
Reduced density matrices play a central role in quantum chemistry, where the
one-body reduced density matrix (1-RDM) encodes natural orbital occupation
numbers.\cite{helgaker2000molecular} Recent work on variational quantum
eigensolvers\cite{peruzzo2014variational} has highlighted the need for stable
spectral primitives for quantum-classical hybrid algorithms. Our experiments on
H$_2$, LiH, and BeH$_2$ demonstrate FSPA in this context.


\section{Filtered Spectral Projection Algorithm}
\label{sec:theory}

\textit{Block-encoding framework.}---
To make the circuit-level cost precise, we adopt the standard block-encoding
framework.\cite{gilyen2019quantum} A Hermitian operator $\rho$ acting on $n$
qubits ($d = 2^n$) is $(\kappa, \delta)$-block-encoded by a unitary $U_\rho$
acting on $n + \kappa$ qubits such that

\begin{equation}
  \bigl(\langle 0|^{\otimes\kappa} \otimes I_n\bigr)\,
  U_\rho\,
  \bigl(|0\rangle^{\otimes\kappa} \otimes I_n\bigr)
  = \rho + \Delta, \quad \|\Delta\| \le \delta,
  \label{eq:block_encoding}
\end{equation}

where $\kappa$ ancilla qubits carry the block-encoding overhead. Each FSPA
oracle call --- one application of $\rho$ followed by renormalization --- is
implemented by one call to $U_\rho$ and one post-selection on the ancilla
register.

\textit{Problem setting.}---
Let
\[
\rho = \sum_{j=1}^d \lambda_j |\psi_j\rangle\langle\psi_j|
\]
be a Hermitian operator with eigenvalues ordered as
$\lambda_1 \ge \lambda_2 \ge \cdots \ge \lambda_d \ge 0$. Given an input state
$|\phi_0\rangle=\sum_j a_j|\psi_j\rangle$, we distinguish four objectives:

\begin{itemize}
\item \textbf{Top-eigenvector recovery:} output a state close to
  $|\psi_1\rangle$ when $\lambda_1>\lambda_2$.
\item \textbf{Dominant-subspace recovery:} output a state (or projector)
  supported on
  $\mathcal{S}_R=\mathrm{span}\{|\psi_1\rangle,\ldots,|\psi_R\rangle\}$
  when $\lambda_1=\cdots=\lambda_R>\lambda_{R+1}$.
\item \textbf{Overlap amplification:} increase
  $\langle\phi|P_{\mathcal{S}_R}|\phi\rangle$ from nonzero initial bias.
\item \textbf{Eigenvalue estimation:} approximate $\lambda_j$ to prescribed
  precision.
\end{itemize}

FSPA targets dominant-subspace recovery and overlap amplification directly
rather than eigenvalue estimation.

\begin{algorithm}[H]
\caption{Filtered Spectral Projection Algorithm (FSPA)}
\label{alg:fspa}
\begin{algorithmic}[1]
\Require Hermitian operator $\rho$ with $\|\rho\| \le 1$, initial
  state $|\phi_0\rangle$, number of rounds $T$
\Ensure State with amplified overlap on dominant eigenspace
\State Normalize $|\phi\rangle \gets |\phi_0\rangle / \|\phi_0\|$
\Comment{Normalization eliminates uniform eigenvalue rescaling dependence}
\State Set amplification parameter $\beta \gets 1$
\For{$t = 1$ to $T$}
    \For{$k = 1$ to $\beta$}
        \State $|\phi\rangle \gets \rho |\phi\rangle$
        \State Normalize $|\phi\rangle \gets |\phi\rangle / \|\phi\|$
    \EndFor
    \State $\beta \gets 2\beta$
\EndFor
\State \Return $|\phi\rangle$
\end{algorithmic}
\end{algorithm}

The adaptive growth of $\beta$ yields progressively stronger filtering while
the renormalization step removes global eigenvalue-scale
dependence.\cite{golub2013matrix} At the state-update level, FSPA is
mathematically equivalent to normalized power iteration applied with an
adaptive schedule. This relation is intentional, not a hidden limitation.
What is standard is the gap-limited amplification mechanism; what is new here
is the projection-first formulation in qPCA settings where magnitude collapse
of estimation-first methods is relevant.\cite{nghiem2025new}

From a polynomial viewpoint, each round applies a low-degree spectral filter
to $\rho$, and the adaptive schedule generates a sequence of increasing
effective degrees. This interpretation aligns with block-encoding/QSVT
realizations of polynomial operator
transforms.\cite{gilyen2019quantum,low2019hamiltonian}

\textit{Structural properties.}---
The normalized iteration underlying FSPA has several structural properties.

\begin{proposition}[Eigenvalue Magnitude Invariance]
\label{prop:magnitude_invariance}
Let $\rho$ be a Hermitian operator and let $c>0$. The normalized iterates
produced by FSPA applied to $\rho$ and $c\rho$ are identical at every step.
In particular, FSPA is invariant under uniform rescaling of the spectrum.
\end{proposition}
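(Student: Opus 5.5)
We argue by induction on the total number of inner-loop updates. The schedule in Algorithm~\ref{alg:fspa} (the values of $\beta$ and $T$) depends only on the round counter, not on $\rho$. Hence the runs on $\rho$ and on $c\rho$ perform the same sequence of elementary steps $|\phi\rangle \mapsto \rho|\phi\rangle/\|\rho|\phi\rangle\|$, indexed by a single counter $m = 1, 2, \ldots$. We write $|\phi_m\rangle$ for the $m$-th iterate under $\rho$ and $|\phi'_m\rangle$ for the $m$-th iterate under $c\rho$. The claim is that $|\phi_m\rangle = |\phi'_m\rangle$ for every $m$.

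For the base case, both runs begin from the same normalized vector $|\phi_0\rangle/\|\phi_0\|$, since the initial normalization does not involve the operator. For the inductive step, assume $|\phi_{m-1}\rangle = |\phi'_{m-1}\rangle =: |\chi\rangle$. By linearity, $(c\rho)|\chi\rangle = c\,\rho|\chi\rangle$. By homogeneity of the norm and $c>0$, $\|c\rho|\chi\rangle\| = c\,\|\rho|\chi\rangle\|$. Therefore
\[
|\phi'_m\rangle = \frac{c\,\rho|\chi\rangle}{c\,\|\rho|\chi\rangle\|} = \frac{\rho|\chi\rangle}{\|\rho|\chi\rangle\|} = |\phi_m\rangle ,
\]
which closes the induction. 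The factor $c$ cancels exactly, with no phase ambiguity, because $c$ is a positive real number.

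The only point needing care is well-definedness: the normalization must never divide by zero. This is where I expect the only (minor) obstacle. Since $c>0$, we have $\rho|\chi\rangle = 0$ if and only if $c\rho|\chi\rangle = 0$, so the two runs are defined or undefined at exactly the same steps. We may therefore assume, as the algorithm implicitly does, that $\rho|\chi\rangle \neq 0$ throughout; this holds, for example, whenever $a_1 \neq 0$ and $\lambda_1 > 0$. A second point is that the requirement $\|\rho\| \le 1$ in Algorithm~\ref{alg:fspa} might fail for $c\rho$ when $c > 1/\|\rho\|$. This requirement matters only for the block-encoding implementation, not for the mathematical iteration, so the iterates agree regardless; we will note this in a remark.

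Equivalently, the induction can be unrolled into a closed form. After $M$ total updates the iterate is $\rho^M|\phi_0\rangle/\|\rho^M|\phi_0\rangle\|$, and replacing $\rho$ by $c\rho$ multiplies both numerator and denominator by $c^M$. This yields the final statement that FSPA is invariant under uniform rescaling of the spectrum.
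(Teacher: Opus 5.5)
Your proof is correct and follows the paper's argument, which simply notes that the result is immediate from renormalization. Your induction, and the closed form $\rho^M|\phi_0\rangle/\|\rho^M|\phi_0\rangle\|$ in which $c^M$ cancels, spell that one-line argument out explicitly; your remarks on nonvanishing denominators and on the $\|\rho\|\le 1$ requirement are sound additions.
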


This first property is immediate from renormalization but central to the
interpretation of FSPA as a projection-first primitive.

\begin{proposition}[Gap-Dependent Bias Amplification]
\label{prop:gap_amplification}
Let $\rho$ be Hermitian with ordered eigenvalues $\lambda_1 \ge \lambda_2 \ge \cdots$.
If $\lambda_1 > \lambda_2$ and the initial state has nonzero overlap with the
dominant eigenvector, then the overlap produced by FSPA increases monotonically
with the number of rounds. The convergence rate is governed by
$\lambda_1/\lambda_2$, equivalently by the spectral gap. If
$\lambda_1 = \lambda_2$, convergence is only to the corresponding invariant
subspace.
\end{proposition}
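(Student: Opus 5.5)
The plan is to reduce FSPA to normalized power iteration and then read off the overlap in the eigenbasis of $\rho$. By construction, after $t$ rounds the algorithm has applied $\rho$ a total of
\[
N_t=\sum_{s=0}^{t-1}2^s=2^t-1
\]
times, with a renormalization after each application. Renormalization only rescales the state by a positive number. So, exactly as in Proposition~\ref{prop:magnitude_invariance}, the iterate after $t$ rounds is
\[
|\phi_t\rangle=\rho^{N_t}|\phi_0\rangle\,/\,\|\rho^{N_t}|\phi_0\rangle\|.
\]
Expanding $|\phi_0\rangle=\sum_j a_j|\psi_j\rangle$, we get $\rho^{N}|\phi_0\rangle=\sum_j a_j\lambda_j^{N}|\psi_j\rangle$. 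The overlap with the dominant eigenvector is therefore
\[
F(N)=\frac{|a_1|^2\lambda_1^{2N}}{\sum_j |a_j|^2\lambda_j^{2N}}
=\Bigl(1+\sum_{j\ge 2}\frac{|a_j|^2}{|a_1|^2}\Bigl(\frac{\lambda_j}{\lambda_1}\Bigr)^{2N}\Bigr)^{-1}.
\]
This uses $a_1\neq 0$, and $\lambda_1>0$, which follows from $\lambda_1>\lambda_2\ge 0$.

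\emph{Monotonicity.} Since $0\le\lambda_j/\lambda_1\le\lambda_2/\lambda_1<1$ for every $j\ge2$, each term $(\lambda_j/\lambda_1)^{2N}$ is non-increasing in $N$. Hence $F(N)$ is non-decreasing in $N$. It is strictly increasing whenever some $a_j$ with $j\ge 2$ and $\lambda_j>0$ is nonzero; otherwise the state is already $|\psi_1\rangle$ after one application. Because $N_t$ is strictly increasing in $t$, the overlap produced by FSPA increases monotonically with the number of rounds.

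\emph{Rate.} Bounding each ratio by $\lambda_2/\lambda_1$ gives
\[
1-F(N)\le \frac{1-|a_1|^2}{|a_1|^2}\Bigl(\frac{\lambda_2}{\lambda_1}\Bigr)^{2N}.
\]
So the error decays geometrically in $N$ with base $(\lambda_2/\lambda_1)^2$, and the rate is governed by $\lambda_1/\lambda_2$. Writing $\lambda_2/\lambda_1=1-(\lambda_1-\lambda_2)/\lambda_1$ expresses the same rate through the spectral gap. Setting the bound equal to $\epsilon$ shows that
\[
N=\mathcal{O}\bigl((\log(1/\epsilon)+\log(1/|a_1|^2))/\log(\lambda_1/\lambda_2)\bigr)
\]
applications suffice. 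Since $N_t=2^t-1$, the doubling schedule overshoots the required $N$ by at most a factor of two, which does not affect this bound.

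\emph{Degenerate case.} If $\lambda_1=\lambda_2=\cdots=\lambda_R>\lambda_{R+1}$, the same computation applies with $P_{\mathcal S_R}$ in place of $|\psi_1\rangle\langle\psi_1|$. The components along $\psi_1,\dots,\psi_R$ all scale by the common factor $\lambda_1^N$, so their relative proportions $a_1:\cdots:a_R$ never change. No individual eigenvector is therefore singled out. On the other hand, $\langle\phi_t|P_{\mathcal S_R}|\phi_t\rangle\to 1$ at rate $(\lambda_{R+1}/\lambda_1)^{2N}$, provided $P_{\mathcal S_R}|\phi_0\rangle\ne 0$. This is convergence to the invariant subspace only.

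The main technical point is the first step: showing that interleaved renormalizations commute with the powers of $\rho$ up to a positive scalar. Every norm must be nonzero for the iteration to be well defined, and this needs $\rho^{k}|\phi_0\rangle\neq0$ for all $k$. That holds because $a_1\lambda_1^k\neq 0$. Everything after this is elementary monotonicity of geometric terms.
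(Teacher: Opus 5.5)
Your proposal is correct and follows essentially the paper's route. The paper gives no standalone proof of this proposition, but its proof sketch of Theorem~\ref{thm:complexity}, together with the proof of Theorem~\ref{thm:subspace_complexity} for the degenerate case, uses the same steps you do: expand the normalized $\rho^{N}|\phi_0\rangle$ in the eigenbasis, bound the error by $(\lambda_2/\lambda_1)^{2N}$ (respectively $(\lambda_{R+1}/\lambda_1)^{2N}$), and account for the factor-of-two overhead of the doubling schedule. Your explicit monotonicity argument and your use of $\lambda_j\ge 0$ from the problem setting fill in details the paper leaves implicit; without nonnegativity, a large negative eigenvalue could dominate the iteration.
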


\begin{proposition}[Subspace Convergence]
\label{prop:subspace_convergence}
Let the dominant eigenspace of $\rho$ have degeneracy $R \ge 1$, i.e.,
$\lambda_1 = \cdots = \lambda_R > \lambda_{R+1}$. Then, in the limit of
infinite amplification rounds $T \to \infty$, FSPA produces a normalized state
whose fidelity with the dominant invariant subspace $\mathcal{S}_R$ approaches
unity. Convergence to a unique eigenvector is not guaranteed unless degeneracy
is lifted. A quantitative convergence rate is given in
Theorem~\ref{thm:subspace_complexity}.
\end{proposition}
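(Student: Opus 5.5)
The plan is to reduce FSPA to normalized power iteration and then read off the fidelity directly in the eigenbasis. After $T$ rounds the schedule doubles $\beta$ each round, so the total number of applications of $\rho$ is $K_T=\sum_{t=1}^{T}2^{t-1}=2^T-1$. Normalization commutes with subsequent linear maps up to a positive scalar; this is the same observation behind Proposition~\ref{prop:magnitude_invariance}. Hence the output equals $\rho^{K_T}|\phi_0\rangle/\|\rho^{K_T}|\phi_0\rangle\|$, provided no intermediate vector vanishes. That holds whenever the initial state has nonzero overlap with $\mathcal{S}_R$, since then $\lambda_1>0$. I will assume this nonzero overlap, $\sum_{j\le R}|a_j|^2>0$, as the implicit hypothesis inherited from Proposition~\ref{prop:gap_amplification}; without it the statement is false.

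Expanding $|\phi_0\rangle=\sum_j a_j|\psi_j\rangle$ gives $\rho^{K}|\phi_0\rangle=\lambda_1^{K}\sum_{j\le R}a_j|\psi_j\rangle+\sum_{j>R}\lambda_j^{K}a_j|\psi_j\rangle$. The fidelity with the subspace is
\begin{equation*}
F_K=\frac{\langle\phi_K|P_{\mathcal{S}_R}|\phi_K\rangle}{1}
=\frac{A}{A+\sum_{j>R}(\lambda_j/\lambda_1)^{2K}|a_j|^2},\qquad A=\sum_{j\le R}|a_j|^2 .
\end{equation*}
Since $0\le\lambda_j/\lambda_1\le\lambda_{R+1}/\lambda_1<1$ for $j>R$, the error term is at most $(\lambda_{R+1}/\lambda_1)^{2K}(1-A)$. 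This gives
\[
1-F_K\le\frac{1-A}{A}\Bigl(\frac{\lambda_{R+1}}{\lambda_1}\Bigr)^{2K}.
\]
Because $K_T=2^T-1\to\infty$ as $T\to\infty$, we get $F_{K_T}\to1$. The same bound supplies the rate that Theorem~\ref{thm:subspace_complexity} refers to.

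For the non-uniqueness clause, I would note that within $\mathcal{S}_R$ the component is multiplied by the common factor $\lambda_1^K$. The limit state is therefore $\sum_{j\le R}a_j|\psi_j\rangle/\sqrt{A}$, which is just the normalized projection of the initial state and depends on $|\phi_0\rangle$. To show it is not a fixed eigenvector, it suffices to take two warm starts whose projections onto $\mathcal{S}_R$ are different vectors in that subspace (possible when $R\ge2$); they converge to different limits.

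The main obstacle is not technical but one of hypotheses. I have to state carefully that the nonzero-overlap condition is needed and that $\lambda_j\ge0$, which is assumed in the problem setting. The nonnegativity ensures that no eigenvalue of magnitude $\ge\lambda_1$ lies outside $\mathcal{S}_R$. I also have to justify the collapse of repeated normalization into a single normalization, which follows by induction on the number of applications.
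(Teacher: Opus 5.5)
Your proposal is correct and takes essentially the same route as the paper. The paper gives no standalone proof of this proposition; it relies on the eigenbasis decomposition $\rho^k|\phi_0\rangle=\lambda_1^kP_{\mathcal{S}_R}|\phi_0\rangle+\sum_{j>R}a_j\lambda_j^k|\psi_j\rangle$ and the bound $\|P_{\mathcal{S}_R}|\phi_k\rangle\|^2\ge\alpha/(\alpha+(1-\alpha)r_{\mathcal{S}}^{2k})$ from the proof of Theorem~\ref{thm:subspace_complexity}. The paper leaves implicit several points you make explicit, all correctly: the nonzero-overlap hypothesis $A=\alpha>0$ (missing from the statement but assumed in Theorem~\ref{thm:subspace_complexity}), the count $K_T=2^T-1$ for the doubling schedule, and the fact that the limit $P_{\mathcal{S}_R}|\phi_0\rangle/\sqrt{A}$ depends on the warm start.
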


These propositions clarify the roles of scale invariance, spectral ratios, and
degeneracy. The main quantitative guarantee is the following.

\begin{theorem}[Oracle Complexity of FSPA]
\label{thm:complexity}
Let $\rho$ be a Hermitian operator with spectral decomposition
$\rho = \sum_{j=1}^d \lambda_j |\psi_j\rangle\langle\psi_j|$, where
$\lambda_1 > \lambda_2 \ge \cdots \ge 0$. Let the initial state be
$|\phi_0\rangle = \sum_{j=1}^d a_j |\psi_j\rangle$ with $a_1 \neq 0$. Define
the spectral ratio $r := \lambda_2/\lambda_1 < 1$. Then FSPA produces a state
$|\phi_k\rangle$ satisfying $|\langle\psi_1|\phi_k\rangle|^2 \ge 1 - \epsilon$
after at most
\[
  \mathcal{O}\!\left(
    \frac{\log(1/\epsilon) + \log(1/|a_1|^2)}
         {\log(\lambda_1/\lambda_2)}
  \right)
\]
applications of $\rho$.
\end{theorem}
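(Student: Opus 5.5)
The approach is to reduce FSPA to unnormalized power iteration and then bound the overlap directly in the eigenbasis. By Proposition~\ref{prop:magnitude_invariance}, renormalization only rescales the state. So after a total of $k$ applications of $\rho$, summed over however many doubling rounds have elapsed, the FSPA iterate is exactly
\[
|\phi_k\rangle = \frac{\rho^k|\phi_0\rangle}{\|\rho^k|\phi_0\rangle\|}
= \frac{\sum_j a_j\lambda_j^k|\psi_j\rangle}{\bigl(\sum_j |a_j|^2\lambda_j^{2k}\bigr)^{1/2}} .
\]
The intermediate normalizations cancel because each one is a positive scalar. I take $\|\phi_0\|=1$, so that $\sum_j|a_j|^2=1$. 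Note that $\lambda_1>0$, since $\lambda_1>\lambda_2\ge 0$.

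\textbf{Step 1 (overlap formula).} Compute the overlap explicitly:
\[
|\langle\psi_1|\phi_k\rangle|^2 = \frac{|a_1|^2\lambda_1^{2k}}{|a_1|^2\lambda_1^{2k}+\sum_{j\ge2}|a_j|^2\lambda_j^{2k}} .
\]
Using $\lambda_j\le\lambda_2$ for $j\ge 2$, the tail sum is at most $(1-|a_1|^2)\lambda_2^{2k}\le\lambda_2^{2k}$. Dividing numerator and denominator by $|a_1|^2\lambda_1^{2k}$ gives
\[
1-|\langle\psi_1|\phi_k\rangle|^2 \le \frac{r^{2k}/|a_1|^2}{1+r^{2k}/|a_1|^2}\le \frac{r^{2k}}{|a_1|^2}.
\]
(Monotonicity in $k$ also falls out here, which is consistent with Proposition~\ref{prop:gap_amplification}.)

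\textbf{Step 2 (solve for $k$).} It suffices to have $r^{2k}\le \epsilon|a_1|^2$, that is,
\[
k \ge k^\star:=\frac{\log(1/\epsilon)+\log(1/|a_1|^2)}{2\log(\lambda_1/\lambda_2)} .
\]
If $\lambda_2=0$, a single application already suffices. This is consistent with the bound, read as $\mathcal{O}(1)$.

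\textbf{Step 3 (account for the doubling schedule).} FSPA only inspects states at the ends of rounds. After $T$ rounds it has used $K_T=2^T-1$ applications. The step I expect to need the most care is overshoot: the first $T$ with $K_T\ge k^\star$ satisfies $K_{T-1}<k^\star$, hence $K_T=2K_{T-1}+1\le 2k^\star+1$. Alternatively, because Step~1 shows the bound holds for every intermediate inner iterate, one can stop at exactly $\lceil k^\star\rceil$ applications. Either way the total count is $\mathcal{O}(k^\star)$, which gives the claimed bound with the constant $1/2$ absorbed.

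A final remark concerns block encoding. If the oracle is a $(\kappa,\delta)$ block encoding, the same argument applies to $\rho+\Delta$, or to exact $\rho$ under post-selection. The statement counts applications of $\rho$, so the success probability of post-selection is not part of this bound and I would not treat it here.
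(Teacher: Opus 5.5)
Your proof is correct and follows the paper's route: expand $\rho^k|\phi_0\rangle$ in the eigenbasis, bound the tail by $(1-|a_1|^2)r^{2k}$, solve for $k$, and absorb the roughly factor-of-two overshoot of the doubling schedule. The only difference is cosmetic: you use the cruder bound $1-F_k\le r^{2k}/|a_1|^2$ rather than inverting the paper's lower bound on $F_k$ exactly, which changes only constants.
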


\textit{Proof sketch.}
We expand the initial state in the eigenbasis and apply $\rho^k$:
\[
  \rho^k|\phi_0\rangle
  = a_1\lambda_1^k|\psi_1\rangle
  + \sum_{j\ge2} a_j\lambda_j^k|\psi_j\rangle
  = \lambda_1^k\!\left(
      a_1|\psi_1\rangle + \sum_{j\ge2} a_j r_j^k|\psi_j\rangle
    \right),
\]
where $r_j = \lambda_j/\lambda_1 \le r < 1$. After normalization,
\[
  F_k = \frac{|a_1|^2}{|a_1|^2 + \sum_{j\ge2}|a_j|^2 r_j^{2k}}
      \ge \frac{|a_1|^2}{|a_1|^2 + (1-|a_1|^2)r^{2k}}.
\]
Imposing $F_k \ge 1-\epsilon$ and solving gives
\[
  k \ge \frac{\log\!\bigl((1-\epsilon)\epsilon^{-1}(1-|a_1|^2)|a_1|^{-2}\bigr)}
             {2\log(\lambda_1/\lambda_2)},
\]
yielding the stated complexity. The adaptive doubling schedule achieves the
required $k$ oracle calls using $T = \lceil\log_2(k+1)\rceil$ rounds, since
the total calls after $T$ rounds is $2^T - 1 \le 2k$, a factor-of-two overhead
with no change in asymptotic complexity. \hfill$\square$

\begin{figure}[t]
    \centering
    \includegraphics[width=\linewidth]{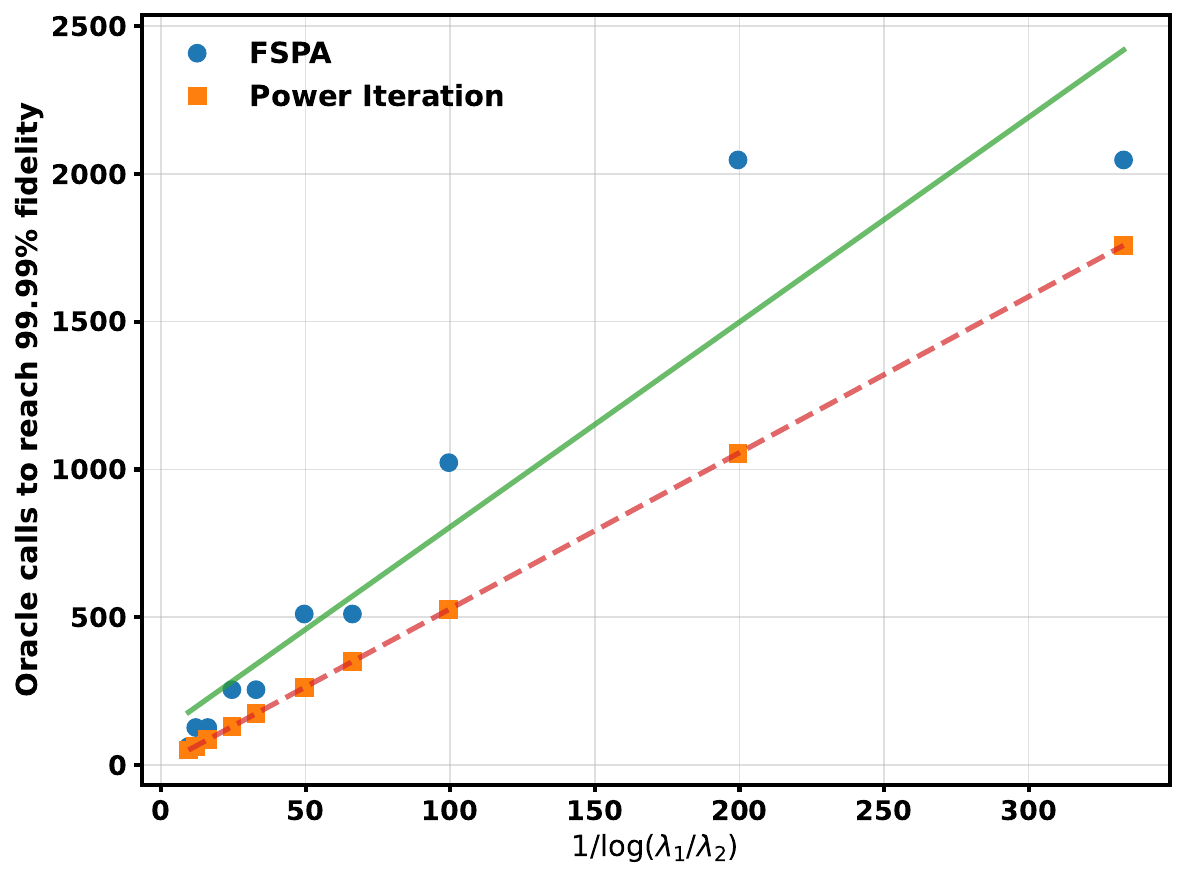}
    \caption{Empirical validation of the gap-dependent oracle complexity of
    FSPA. The total number of oracle applications required to reach 99.99\%
    fidelity is plotted against the theoretical scaling variable
    $1/\log(\lambda_1/\lambda_2)$. Linear regression confirms proportional
    scaling, consistent with the predicted complexity
    $\mathcal{O}(\log(1/\epsilon)/\log(\lambda_1/\lambda_2))$. Classical power
    iteration is shown for comparison.}
    \label{fig:gap_scaling_validation}
\end{figure}

Figure~\ref{fig:gap_scaling_validation} strengthens the complexity claim
empirically. The near-linear trend against $1/\log(\lambda_1/\lambda_2)$
confirms that the empirical oracle count is controlled by spectral-ratio
amplification, not by absolute eigenvalue magnitude. FSPA and classical power
iteration follow comparable slopes under matched initialization, indicating
that FSPA preserves the same gap-law asymptotics while shifting the algorithmic
objective toward projection robustness.

\textit{Circuit resources.}---

\begin{proposition}[Circuit Resources of FSPA]
\label{prop:circuit_resources}
Let $\rho$ act on $n$ qubits ($d = 2^n$) and suppose $\rho$ is
$(\kappa,\delta)$-block-encoded by a unitary $U_\rho$ with gate complexity
$\mathcal{G}(U_\rho)$ and depth $\mathcal{D}(U_\rho)$. Then FSPA produces a
state with fidelity $\ge 1-\epsilon$ with the dominant eigenvector using:

\begin{itemize}
  \item \textbf{Qubits:} $n + \kappa + \mathcal{O}(1)$,
  \item \textbf{Calls to $U_\rho$:}
    $\mathcal{O}((\log(1/\epsilon)+\log(1/|a_1|^2))/
    \log(\lambda_1/\lambda_2))$,
  \item \textbf{Gate count:}
    $\mathcal{O}((\log(1/\epsilon)+\log(1/|a_1|^2))
    \cdot\mathcal{G}(U_\rho)/\log(\lambda_1/\lambda_2))$,
  \item \textbf{Circuit depth:}
    $\mathcal{O}((\log(1/\epsilon)+\log(1/|a_1|^2))
    \cdot\mathcal{D}(U_\rho)/\log(\lambda_1/\lambda_2))$.
\end{itemize}
\end{proposition}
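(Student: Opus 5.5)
The plan is to reduce Proposition~\ref{prop:circuit_resources} to Theorem~\ref{thm:complexity}. The work is to show that each abstract oracle call in Algorithm~\ref{alg:fspa} (apply $\rho$, then renormalize) can be realized by one call to $U_\rho$ with $\mathcal{O}(1)$ extra qubits and $\mathcal{O}(1)$ extra gates. Once that is in place, the stated call count, gate count and depth follow by multiplying the oracle bound of Theorem~\ref{thm:complexity} by $\mathcal{G}(U_\rho)$ and $\mathcal{D}(U_\rho)$.

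\textbf{Step 1: one round.} Prepare $|0\rangle^{\otimes\kappa}\otimes|\phi\rangle$ and apply $U_\rho$. By Eq.~\eqref{eq:block_encoding}, projecting the ancilla onto $|0\rangle^{\otimes\kappa}$ leaves $(\rho+\Delta)|\phi\rangle$ on the system register. Conditioned on the success outcome, the post-measurement state is automatically normalized, $(\rho+\Delta)|\phi\rangle/\|(\rho+\Delta)|\phi\rangle\|$. Renormalization is therefore free at the circuit level, and this is the circuit-level content of Proposition~\ref{prop:magnitude_invariance}. The $\kappa$ ancillas are measured and reset to $|0\rangle$ after each call, so they are reused. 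The only other workspace is a constant number of qubits for the doubling counter control and flag bookkeeping; $\beta$ is classical control, so it costs no qubits. This gives the qubit count $n+\kappa+\mathcal{O}(1)$, independent of $d$.

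\textbf{Step 2: counting.} Set $\delta=0$, or take $\delta$ small enough (e.g.\ $\delta\lesssim\epsilon(\lambda_1-\lambda_2)$) that perturbation theory keeps the dominant eigenvector within the target fidelity. Theorem~\ref{thm:complexity} then bounds the number of applications of $\rho$ by $k=\mathcal{O}((\log(1/\epsilon)+\log(1/|a_1|^2))/\log(\lambda_1/\lambda_2))$. The doubling schedule uses at most $2k$ applications, as noted in the proof sketch. Each application is one call to $U_\rho$ plus $\mathcal{O}(\kappa)$ reset and measurement gates, which are absorbed into $\mathcal{G}(U_\rho)$. The calls are sequential, so depth adds and the gate and depth bounds follow.

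\textbf{Main obstacle: post-selection success probability.} The $i$-th call succeeds with probability $\|\rho|\phi_{i-1}\rangle\|^2$, which may be small when $\|\rho\|$ is small. A naive repeat-until-success strategy would multiply the call count by the inverse of that probability. To stay within the stated bounds, I would interpret the oracle count, as the paper's framework does, as counting coherent applications conditioned on success. If pressed, I would instead wrap each step in fixed-point or oblivious amplitude amplification, which costs $\mathcal{O}(1/\|\rho|\phi\rangle\|)$ calls. That factor depends only on the spectral scale and not on $d$, and I would state it as an explicit caveat rather than claim it disappears. I expect this accounting, rather than the counting itself, to be the delicate point.
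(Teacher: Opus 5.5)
Your Steps~1 and~2 are the paper's own proof. It takes the call count from Theorem~\ref{thm:complexity} and charges each oracle call one use of $U_\rho$ on $n+\kappa$ qubits, hence $\mathcal{G}(U_\rho)$ gates and $\mathcal{D}(U_\rho)$ depth. It charges renormalization one extra ancilla and $\mathcal{O}(1)$ gates for post-selection. Your additions (reusing the $\kappa$ ancillas, the factor two from doubling, tying $\delta$ to the gap) are consistent with this. Your primary resolution of the success-probability issue, counting only the branch in which every post-selection succeeds, is exactly the accounting the paper's proof uses without stating it. Under that reading your argument is complete and coincides with the paper's.

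The fallback you offer ``if pressed'' does not work, and your caveat understates the problem. Oblivious amplitude amplification needs the post-selected block to be proportional to a unitary, so that the success probability does not depend on the input; here the block is the PSD operator $\rho$. Ordinary or fixed-point amplitude amplification of step $i$ needs reflections about $|\phi_{i-1}\rangle$, i.e.\ the whole preceding circuit and its inverse, so per-step costs multiply rather than add. The success probabilities of your sequential scheme telescope: $\prod_{i=1}^{k}\|\rho|\phi_{i-1}\rangle\|^2=\|\rho^k|\phi_0\rangle\|^2\in[\,|a_1|^2\lambda_1^{2k},\,\lambda_1^{2k}\,]$. The honest route is a QSVT block-encoding of $\rho^k$ followed by one amplitude amplification, which also needs a preparation unitary for $|\phi_0\rangle$. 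It costs $\mathcal{O}(k/\|\rho^k|\phi_0\rangle\|)$ calls. So the magnitude dependence that Proposition~\ref{prop:magnitude_invariance} removes from the iterates reappears in the cost, and the $|a_1|$ dependence becomes polynomial.

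The overhead is also not $d$-independent, and no scheme can restore the stated bound once failed post-selections are charged. Take $\rho=(1-\tfrac1d)|x\rangle\langle x|+\tfrac1d I=\tfrac{1+1/d}{2}I-\tfrac{1-1/d}{2}O_x$. An LCU with one controlled call to the Grover oracle $O_x=I-2|x\rangle\langle x|$ block-encodes it exactly. Take $|\phi_0\rangle$ uniform. Then $\lambda_1/\lambda_2=d$ and $|a_1|^2=1/d$, so the stated bound is $\mathcal{O}(1)$ calls for constant $\epsilon$. One successful post-selection does give fidelity $1-\mathcal{O}(1/d)$. But that branch has probability $\|\rho|\phi_0\rangle\|^2\approx 1/d$, so your first-step factor $1/\|\rho|\phi_0\rangle\|$ is already $\approx\sqrt d$. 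The unstructured-search lower bound then forces $\Omega(\sqrt d)$ calls to $U_\rho$ to output $|x\rangle$ with constant probability. So the proposition holds only in the heralded, successful-branch sense that you and the paper both adopt. You should state it that way, not suggest that amplitude amplification recovers the bounds up to a $d$-independent factor.
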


\begin{proof}
The call count follows from Theorem~\ref{thm:complexity}. Each call to the
oracle uses one call to $U_\rho$ acting on $n+\kappa$ qubits, contributing
$\mathcal{G}(U_\rho)$ gates and depth $\mathcal{D}(U_\rho)$. The
renormalization step requires one additional ancilla qubit and $\mathcal{O}(1)$
gates per call for post-selection.
\end{proof}

\noindent\textit{Remark.}
For a sparse $d\times d$ density matrix with at most $s$ nonzero entries per
row, standard sparse block-encoding constructions yield
$\mathcal{G}(U_\rho)=\mathcal{O}(s\,\mathrm{polylog}(d))$ and
$\kappa=\mathcal{O}(\log d)$.\cite{gilyen2019quantum} The gate count is
therefore
$\mathcal{O}(s\,\mathrm{polylog}(d) \cdot(\log(1/\epsilon)+\log(1/|a_1|^2))/\log(\lambda_1/\lambda_2))$,
independent of the system dimension $d$ in the prefactor.

\textit{Optimality.}---
The upper bound in Theorem~\ref{thm:complexity} is tight.

\begin{proposition}[Oracle Complexity Lower Bound]
\label{prop:lower_bound}
Let $\rho$ be Hermitian with $\lambda_1 > \lambda_2$ and $r = \lambda_2/\lambda_1$.
Any algorithm using oracle access to $\rho$ that produces a state
$|\phi_k\rangle$ satisfying $|\langle\psi_1|\phi_k\rangle|^2 \ge 1 - \epsilon$
from initial overlap $|a_1|^2$ requires at least
\[
  \Omega\!\left(
    \frac{\log(1/\epsilon) + \log(1/|a_1|^2)}
         {\log(\lambda_1/\lambda_2)}
  \right)
\]
oracle calls.
\end{proposition}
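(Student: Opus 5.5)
The plan is to prove the bound in the oracle model that FSPA itself lives in, and to say so explicitly, because the statement cannot hold for every conceivable algorithm. A degree-$k$ Chebyshev filter on $[0,\lambda_2]$, which QSVT can implement, needs only about $\log(1/\epsilon)/\sqrt{1-r}$ calls. That is asymptotically fewer than $1/\log(1/r)\sim 1/(1-r)$ when $r\to1$. I would therefore make the hypothesis precise as follows. An admissible algorithm accesses $\rho$ only through applications of $\rho$ interleaved with renormalization, i.e.\ filter-and-renormalize without forming linear combinations of different powers. After $k$ oracle calls its output is then $|\phi_k\rangle=\rho^k|\phi_0\rangle/\|\rho^k|\phi_0\rangle\|$. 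By Proposition~\ref{prop:magnitude_invariance}, renormalizations at intermediate steps do not change this.

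\textbf{Step 1 (hard instance).} For the given $\lambda_1>\lambda_2$ and initial overlap $|a_1|^2$, choose $\rho$ and $|\phi_0\rangle$ so that all non-dominant weight sits on $|\psi_2\rangle$: $|a_2|^2=1-|a_1|^2$ and $a_j=0$ for $j\ge3$. This is the extremal case of the inequality in the proof sketch of Theorem~\ref{thm:complexity}, since $r_j=r$ wherever there is weight. That inequality is therefore an equality:
\[
F_k=\frac{|a_1|^2}{|a_1|^2+(1-|a_1|^2)\,r^{2k}} .
\]

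\textbf{Step 2 (inversion).} Impose $F_k\ge 1-\epsilon$ and solve exactly, as in the sketch, which gives the necessary condition
\[
k\ \ge\ \frac{\log\!\bigl((1-\epsilon)(1-|a_1|^2)\,\epsilon^{-1}|a_1|^{-2}\bigr)}{2\log(\lambda_1/\lambda_2)} .
\]
In the nontrivial regime $\epsilon\le 1/2$ and $|a_1|^2\le 1/2$, the factors $(1-\epsilon)$ and $(1-|a_1|^2)$ are at least $1/2$. The numerator is then at least $\log(1/\epsilon)+\log(1/|a_1|^2)-2\log 2$. To absorb the additive constant, I would assume the right-hand side is at least a fixed constant, for instance by requiring $\epsilon|a_1|^2\le 1/16$; otherwise the bound is $\Omega(1)$ trivially. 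This yields the claimed $\Omega$ bound, with constant $1/4$ for small enough $\epsilon|a_1|^2$. It also matches Theorem~\ref{thm:complexity} up to constants, because the upper bound used the same expression.

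\textbf{Step 3 (main obstacle).} The main obstacle is justifying the restriction of the model, since the exact formula only constrains power-type access. I would first try to extend the argument to algorithms that may also use spectrum-oblivious nonnegative filters $\rho^m$ with adaptive, data-dependent schedules, including FSPA's doubling. Their outputs are still proportional to a single power $\rho^{k'}|\phi_0\rangle$ with $k'\le k$, so Step~2 applies unchanged. I would then state as a remark that for general polynomial (Krylov/QSVT) access the correct lower bound is the Chebyshev-type $\Omega(\log(1/(\epsilon|a_1|^2))/\sqrt{1-r})$. That bound is obtained from Markov-brothers-type extremal polynomial bounds on the same two-level instance, except that the second eigenvalue is now hidden adversarially in $[0,\lambda_2]$. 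This delimits the precise sense in which FSPA is optimal: within the power-iteration class, not among all quantum algorithms.
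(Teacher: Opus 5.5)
Your argument is correct in the power-type access model you fix. It also differs from the paper's proof in a way that matters.

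The paper works with the log-odds $L_k=\log\bigl(F_k/(1-F_k)\bigr)$. It asserts that each normalized application of $\rho$ raises $L_k$ by \emph{at most} $2\log(\lambda_1/\lambda_2)$, and divides the required increase by this cap. For a general state that per-step inequality actually runs the other way. Write $|\phi\rangle=\sum_j c_j|\psi_j\rangle$ with $0\le\lambda_j\le\lambda_2$ for $j\ge2$. One step multiplies the odds by $\lambda_1^2\sum_{j\ge2}|c_j|^2\big/\sum_{j\ge2}\lambda_j^2|c_j|^2\ge(\lambda_1/\lambda_2)^2$. Equality holds exactly when all non-dominant weight lies in the $\lambda_2$-eigenspace; weight on a zero eigenvector, for example, is eliminated in a single call.

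So the potential argument is only valid on an instance like your Step~1, where $L_k=L_0+2k\log(\lambda_1/\lambda_2)$ holds with equality. On that instance the paper's division and your exact inversion give the same quantity, $\log\frac{(1-\epsilon)(1-|a_1|^2)}{\epsilon|a_1|^2}\big/\bigl(2\log(\lambda_1/\lambda_2)\bigr)$. Your route therefore supplies what the paper's proof leaves out: the bound is a worst case over instances, witnessed by the two-level state.

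Your route also states the access model honestly. The paper's proof, like yours, only analyzes the update $|\phi\rangle\mapsto\rho|\phi\rangle/\|\rho|\phi\rangle\|$, so its ``any algorithm'' is unsupported. Your Chebyshev/QSVT observation shows that claim is false in general. The one thing the step-by-step potential offers is direct handling of adaptive schedules. Your observation that such schedules still output a normalized $\rho^{k'}|\phi_0\rangle$ with $k'\le k$ covers that equally well.

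Two smaller points. First, the regime $\epsilon|a_1|^2>1/16$ is not ``trivially $\Omega(1)$''. The target there is of order $1/\log(\lambda_1/\lambda_2)$, which can be large. The bound genuinely fails once $|a_1|^2\ge 1-\epsilon$, since zero calls suffice. So state the proposition for, e.g., $\epsilon\le 1/4$ and $|a_1|^2\le 1/2$, where your inversion still gives a uniform constant.

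Second, your closing Krylov/QSVT lower bound is best kept as a heuristic. One adversarially hidden second eigenvalue defeats a fixed polynomial, but not an adaptive Krylov method, which recovers $|\psi_1\rangle$ exactly from $\mathrm{span}\{|\phi_0\rangle,\rho|\phi_0\rangle\}$. In addition, QSVT's requirement $|p|\le1$ on $[-1,1]$ changes the extremal problem when $\lambda_1<1$.
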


\begin{proof}
Define the log-odds $L_k = \log(F_k/(1-F_k))$ where
$F_k = |\langle\psi_1|\phi_k\rangle|^2$. To reach $F_k \ge 1-\epsilon$ from
$F_0 = |a_1|^2$, the quantity $L_k$ must increase by
$\Delta L = \Omega(\log(1/\epsilon)+\log(1/|a_1|^2))$. The most favorable
single-step update $|\phi\rangle\to\rho|\phi\rangle/\|\rho|\phi\rangle\|$
satisfies $F_{k+1}/(1-F_{k+1}) \le r^{-2} \cdot F_k/(1-F_k)$, so each oracle
call increases $L_k$ by at most $2\log(1/r) = 2\log(\lambda_1/\lambda_2)$.
Dividing $\Delta L$ by this maximum per-step increase gives the lower bound.
\end{proof}

\noindent Together, Theorem~\ref{thm:complexity} and
Proposition~\ref{prop:lower_bound} establish FSPA as an \emph{optimal}
projection primitive in the spectral-ratio complexity model.

\textit{Subspace convergence rate.}---

\begin{theorem}[Subspace Oracle Complexity]
\label{thm:subspace_complexity}
Let $\lambda_1 = \cdots = \lambda_R > \lambda_{R+1}$, let $P_{\mathcal{S}_R}$
be the projector onto the dominant invariant subspace, and let
$\alpha := \|P_{\mathcal{S}_R}|\phi_0\rangle\|^2 > 0$ and
$r_{\mathcal{S}} := \lambda_{R+1}/\lambda_1 < 1$. Then FSPA produces a state
satisfying $\|P_{\mathcal{S}_R}|\phi_k\rangle\|^2 \ge 1-\epsilon$ after at
most
\[
  \mathcal{O}\!\left(
    \frac{\log(1/\epsilon) + \log(1/\alpha)}
         {\log(\lambda_1/\lambda_{R+1})}
  \right)
\]
oracle calls.
\end{theorem}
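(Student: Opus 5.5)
The plan is to mirror the proof sketch of Theorem~\ref{thm:complexity}, replacing the single dominant coefficient $a_1$ by the whole component of $|\phi_0\rangle$ in $\mathcal{S}_R$. Write $|\phi_0\rangle = P_{\mathcal{S}_R}|\phi_0\rangle + \sum_{j>R} a_j|\psi_j\rangle$, with $\|P_{\mathcal{S}_R}|\phi_0\rangle\|^2=\alpha$ and $\sum_{j>R}|a_j|^2 = 1-\alpha$ for normalized $|\phi_0\rangle$. Since $\rho$ acts as the scalar $\lambda_1$ on $\mathcal{S}_R$, we get
\[
  \rho^k|\phi_0\rangle = \lambda_1^k\Bigl(P_{\mathcal{S}_R}|\phi_0\rangle + \sum_{j>R} a_j r_j^k|\psi_j\rangle\Bigr),
\]
where $r_j=\lambda_j/\lambda_1\le r_{\mathcal{S}}$ for $j>R$. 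The direction inside $\mathcal{S}_R$ is frozen and only its weight changes; this is why we track only $\|P_{\mathcal{S}_R}|\phi_k\rangle\|^2$ and not convergence to any particular eigenvector, consistent with Proposition~\ref{prop:subspace_convergence}. By Proposition~\ref{prop:magnitude_invariance}, the factor $\lambda_1^k$ disappears on renormalization. Renormalization also commutes with the linear updates, so the FSPA iterate after $k$ total applications is exactly $\rho^k|\phi_0\rangle/\|\rho^k|\phi_0\rangle\|$.

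Next we compute the subspace fidelity exactly, using orthogonality of $\mathcal{S}_R$ and its complement:
\[
  F_k := \|P_{\mathcal{S}_R}|\phi_k\rangle\|^2 = \frac{\alpha}{\alpha + \sum_{j>R}|a_j|^2 r_j^{2k}} \ge \frac{\alpha}{\alpha + (1-\alpha) r_{\mathcal{S}}^{2k}}.
\]
The condition $F_k\ge 1-\epsilon$ then follows once $(1-\alpha)r_{\mathcal{S}}^{2k} \le \alpha\epsilon/(1-\epsilon)$. Since $1-\alpha\le1$ and $1-\epsilon\le1$, it suffices to have
\[
  k \ge \frac{\log(1/\epsilon) + \log(1/\alpha)}{2\log(\lambda_1/\lambda_{R+1})}.
\]
The numerator is nonnegative and the denominator is positive because $r_{\mathcal{S}}<1$. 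This gives the claimed bound on the number of applications of $\rho$.

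Finally, we convert the count $k$ into FSPA rounds exactly as in the proof of Theorem~\ref{thm:complexity}. After $T$ rounds the doubling schedule has used $2^T-1$ calls. Choosing $T=\lceil\log_2(k+1)\rceil$ gives at least $k$ calls and at most $2k+1$ calls, so the overhead is only a constant factor. The fidelity $F_k$ is monotone nondecreasing in $k$, because each term $r_j^{2k}$ decreases. Hence overshooting $k$ does not harm the guarantee, and the round count only needs to meet the threshold. The only delicate point is the step where the degenerate block collapses to the single scalar $\lambda_1$; it holds because all $R$ dominant eigenvalues are exactly equal. In a near-degenerate setting the same argument would instead need the ratio to the first eigenvalue outside the chosen cluster, but that case is not required here.
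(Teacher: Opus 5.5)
Your proof is correct and follows the paper's own argument: decompose $\rho^k|\phi_0\rangle$ into its $\mathcal{S}_R$ component, on which $\rho$ acts as the scalar $\lambda_1$, plus the complement; bound the normalized subspace fidelity below by $\alpha/(\alpha+(1-\alpha)r_{\mathcal{S}}^{2k})$; and solve for $k$. Your extra steps --- the explicit sufficient condition $r_{\mathcal{S}}^{2k}\le\alpha\epsilon$, the monotonicity of $F_k$, and the conversion to doubling rounds with constant-factor overhead (taken from the proof sketch of Theorem~\ref{thm:complexity}) --- fill in details that the paper's proof leaves implicit.
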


\begin{proof}
Applying $\rho^k$ and decomposing into subspace and orthogonal complement
components:
$\rho^k|\phi_0\rangle = \lambda_1^k P_{\mathcal{S}_R}|\phi_0\rangle
+ \sum_{j>R}a_j\lambda_j^k|\psi_j\rangle$. After normalization, the subspace
fidelity satisfies
$\|P_{\mathcal{S}_R}|\phi_k\rangle\|^2 \ge \alpha/(\alpha
+ (1-\alpha)r_{\mathcal{S}}^{2k})$. Imposing this $\ge 1-\epsilon$ and solving
for $k$ gives the stated bound.
\end{proof}

\noindent\textit{Remark.}
Theorem~\ref{thm:subspace_complexity} reduces to Theorem~\ref{thm:complexity}
in the non-degenerate case $R=1$. For $R>1$, the relevant gap is
$\lambda_1-\lambda_{R+1}$ (the subspace boundary gap), which is well-defined
and stable even when individual eigenvalue spacings within $\mathcal{S}_R$
vanish.

\textit{Warm-start oracle reduction.}---

\begin{proposition}[Warm-Start Oracle Reduction]
\label{prop:warmstart}
If the initial state satisfies $|\langle\psi_1|\phi_0\rangle|^2 \ge \delta$,
then FSPA achieves fidelity $1-\epsilon$ using
$\mathcal{O}((\log(1/\epsilon)+\log(1/\delta))/\log(\lambda_1/\lambda_2))$
oracle calls. Compared to a uniformly random initial state with
$|\langle\psi_1|\phi_0\rangle|^2 \approx 1/d$, the warm start reduces the
oracle count by a factor of
$(\log(1/\epsilon)+\log d)/(\log(1/\epsilon)+\log(1/\delta))$, which is
$\Omega(\log d/\log(1/\delta))$ when $\delta \gg 1/d$.
\end{proposition}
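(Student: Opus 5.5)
The plan is to obtain the first claim directly from Theorem~\ref{thm:complexity} and the second claim by comparing two explicit oracle counts. Nothing new about the dynamics of FSPA is needed. The warm-start hypothesis only enters through the numerator term $\log(1/|a_1|^2)$.

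\textbf{Step 1 (upper bound under warm start).} Set $|a_1|^2 = |\langle\psi_1|\phi_0\rangle|^2 \ge \delta$. Then $\log(1/|a_1|^2) \le \log(1/\delta)$. I will reuse the fidelity bound from the proof of Theorem~\ref{thm:complexity}:
\[
F_k \ge \frac{|a_1|^2}{|a_1|^2 + (1-|a_1|^2)r^{2k}}.
\]
The right-hand side is increasing in $|a_1|^2$, so it is at least $\delta/(\delta+(1-\delta)r^{2k})$. Imposing $F_k \ge 1-\epsilon$ gives the sufficient condition
\[
k \ge \frac{\log\bigl((1-\epsilon)(1-\delta)/(\epsilon\delta)\bigr)}{2\log(\lambda_1/\lambda_2)}.
\]
This is $\mathcal{O}((\log(1/\epsilon)+\log(1/\delta))/\log(\lambda_1/\lambda_2))$. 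The doubling schedule of Algorithm~\ref{alg:fspa} costs at most a factor of two, exactly as in Theorem~\ref{thm:complexity}.

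\textbf{Step 2 (random start).} For a Haar-random $|\phi_0\rangle$, the overlap $|a_1|^2$ is Beta$(1,d-1)$ distributed with mean $1/d$. I will take the statement's ``$\approx 1/d$'' at face value and set $|a_1|^2 = 1/d$, which gives $\mathcal{O}((\log(1/\epsilon)+\log d)/\log(\lambda_1/\lambda_2))$. If more rigor is wanted, I would note that $|a_1|^2 \ge 1/(cd)$ holds with constant probability, which changes only constants.

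To read the ratio as a genuine reduction I also need the lower side. Proposition~\ref{prop:lower_bound} shows that $\Theta(\cdot)$, not merely $\mathcal{O}(\cdot)$, is the correct count in both regimes. Dividing the two $\Theta$ expressions cancels the common denominator $\log(\lambda_1/\lambda_2)$ and leaves
\[
\frac{\log(1/\epsilon)+\log d}{\log(1/\epsilon)+\log(1/\delta)}.
\]

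\textbf{Step 3 (asymptotic form of the ratio).} This is the step I expect to be the main obstacle, because the claimed $\Omega(\log d/\log(1/\delta))$ is false as written when $\log(1/\epsilon)$ dominates: the ratio then tends to $1$. I would therefore state the regime explicitly and assume $\log(1/\epsilon) = \mathcal{O}(\log(1/\delta))$, for example at fixed precision. Under that assumption the denominator is $\mathcal{O}(\log(1/\delta))$, and the numerator is at least $\log d$. This gives the $\Omega(\log d/\log(1/\delta))$ factor, which is large precisely when $\delta \gg 1/d$ in the sense that $\log(1/\delta) = o(\log d)$. In the write-up I would flag this added hypothesis as a clarification of the statement rather than leave it hidden.
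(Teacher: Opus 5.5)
Steps~1 and~3 are correct and match what the paper intends. The paper gives no separate proof; the proposition is Theorem~\ref{thm:complexity} evaluated at $|a_1|^2\ge\delta$ versus $|a_1|^2\approx 1/d$. Monotonicity of $x\mapsto x/(x+(1-x)r^{2k})$ is the right way to replace $|a_1|^2$ by $\delta$.

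Your Step~3 caveat is a real correction of the statement. With $\delta=1/2$ and $\log(1/\epsilon)=(\log d)^2$, the ratio tends to $1$ while $\log d/\log 2\to\infty$. Your hypothesis $\log(1/\epsilon)=\mathcal{O}(\log(1/\delta))$ is the right one. However, the gloss ``for example at fixed precision'' also needs $\delta$ bounded away from $1$: as $\delta\to1$ the claimed factor diverges, while the true ratio never exceeds $1+\log d/\log(1/\epsilon)$.

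\textbf{The gap is the appeal to Proposition~\ref{prop:lower_bound} in Step~2.} That proposition's proof has its key inequality reversed. For $|\phi\rangle=\sum_j c_j|\psi_j\rangle$ and $0\le\lambda_j\le\lambda_2$ ($j\ge2$),
\[
\frac{F_{k+1}}{1-F_{k+1}}
=\frac{\lambda_1^2|c_1|^2}{\sum_{j\ge2}\lambda_j^2|c_j|^2}
\;\ge\; r^{-2}\,\frac{F_k}{1-F_k},
\]
so $2\log(\lambda_1/\lambda_2)$ is the \emph{minimum} per-call gain in log-odds, not the maximum. Hence the count is not $\Theta(\cdot)$ for every $\rho$. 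Take $\lambda_3=\cdots=\lambda_d=0$. After one call, a Haar-random start has odds $r^{-2}|a_1|^2/|a_2|^2=\Theta(r^{-2})$ with constant probability, so the $\log d$ term disappears entirely. A warm start with $|a_2|^2=1-\delta$ can then even be slower when $\delta$ is small. Separately, ``$\Theta$ in both regimes'' with $\delta$ in the warm-start count fails anyway, since $|a_1|^2$ may far exceed $\delta$.

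\textbf{Repair.} Argue on the flat-tail instance $\lambda_2=\cdots=\lambda_d$. There the inequality in the proof of Theorem~\ref{thm:complexity} is an equality, $F_k=|a_1|^2/\bigl(|a_1|^2+(1-|a_1|^2)r^{2k}\bigr)$. So FSPA needs exactly $\bigl\lceil\log\bigl((1-\epsilon)(1-|a_1|^2)/(\epsilon|a_1|^2)\bigr)/(2\log(\lambda_1/\lambda_2))\bigr\rceil$ calls, up to the doubling factor.

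For a reduction factor you need only two things: this lower side at the random start, and your Step~1 upper side at the warm start. At the random start the relevant tail is $|a_1|^2\le C/d$, which has probability $\approx1-e^{-C}$ under Beta$(1,d-1)$. The tail $|a_1|^2\ge1/(cd)$ that you mention is the wrong direction for this purpose. Read this way, the proposition is a correct comparison of worst-case FSPA oracle counts, which is the only sense in which it holds.
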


Table~\ref{tab:results_summary} summarises all theoretical results.

\begin{table*}
\centering
\caption{Summary of all theoretical results established in this work.
$r = \lambda_2/\lambda_1$; $\epsilon$ = target infidelity; $\alpha$ = initial subspace
overlap; $r_S = \lambda_{R+1}/\lambda_1$; $\Delta_\tau$ = normalised eigenvalue-threshold
gap; $\kappa$ = block-encoding ancilla overhead; $d = 2^n$ = system dimension.}
\label{tab:results_summary}
\small
\begin{tabularx}{\linewidth}{llXc}
\toprule
\textbf{Result} & \textbf{Type} & \textbf{Key statement} & \textbf{Section} \\
\midrule
Magnitude invariance
  & Proposition~\ref{prop:magnitude_invariance}
  & Iterates of $\rho$ and $c\rho$ are identical
  & \ref{sec:theory} \\[2pt]
Gap-dependent amplification
  & Proposition~\ref{prop:gap_amplification}
  & Overlap monotone; rate $\propto \lambda_1/\lambda_2$
  & \ref{sec:theory} \\[2pt]
Subspace convergence (qualitative)
  & Proposition~\ref{prop:subspace_convergence}
  & $\|P_{\mathcal{S}_R}|\phi_T\rangle\|^2 \to 1$ as $T \to \infty$
  & \ref{sec:theory} \\[2pt]
Oracle complexity (upper bound)
  & Theorem~\ref{thm:complexity}
  & $\mathcal{O}((\log(1/\epsilon) + \log(1/|a_1|^2))/\log(1/r))$
  & \ref{sec:theory} \\[2pt]
Circuit resources
  & Proposition~\ref{prop:circuit_resources}
  & $n{+}\kappa{+}\mathcal{O}(1)$ qubits; gate count $\propto \mathcal{G}(U_\rho)/\log(1/r)$
  & \ref{sec:theory} \\[2pt]
Oracle complexity (lower bound)
  & Proposition~\ref{prop:lower_bound}
  & $\Omega((\log(1/\epsilon) + \log(1/|a_1|^2))/\log(1/r))$ (optimal)
  & \ref{sec:theory} \\[2pt]
Subspace convergence (quantitative)
  & Theorem~\ref{thm:subspace_complexity}
  & $\mathcal{O}((\log(1/\epsilon) + \log(1/\alpha))/\log(\lambda_1/\lambda_{R+1}))$
  & \ref{sec:theory} \\[2pt]
Warm-start reduction
  & Proposition~\ref{prop:warmstart}
  & Bias $\delta$ saves $\mathcal{O}(\log(d/\delta^{-1}))$ calls
  & \ref{sec:theory} \\[2pt]
Threshold-FSPA complexity
  & Theorem~\ref{thm:threshold_complexity}
  & $\mathcal{O}(\log(1/\epsilon)/\Delta_\tau)$; $\mathcal{O}(\log(1/\epsilon))$ if $\tau$ between eigenvalues
  & \ref{sec:threshold} \\[2pt]
DME quantum advantage
  & Remark
  & $\mathcal{O}(\mathrm{polylog}(d))$ copies vs $\mathcal{O}(d^2)$ classical
  & \ref{sec:theory} \\
\bottomrule
\end{tabularx}
\end{table*}

\textit{Quantum advantage in the DME access model.}---
In the density matrix exponentiation (DME) access model,\cite{lloyd2014quantum}
$\rho$ is available only as an ensemble of quantum copies
$\rho^{\otimes n_\mathrm{copy}}$. Lloyd et al.\ showed that $e^{-i\rho t}$
can be implemented using $\mathcal{O}(t^2/\delta)$ copies for simulation error
$\delta$, implying that each FSPA oracle call consumes $\mathcal{O}(1/\delta)$
copies. Combined with Theorem~\ref{thm:complexity}, the total copy complexity
is
\begin{equation}
  \mathcal{O}\!\left(
    \frac{\log(1/\epsilon)+\log(1/|a_1|^2)}
         {\delta\cdot\log(\lambda_1/\lambda_2)}
  \right)
  \label{eq:copy_complexity}
\end{equation}
copies of $\rho$, which is independent of $d$ up to logarithmic factors. In
contrast, any classical algorithm reading $\rho$ explicitly requires
$\mathcal{O}(d^2)$ measurements for full quantum state
tomography.\cite{haah2017sample} This exponential separation constitutes the
quantum advantage of FSPA in the DME model. Tang's
dequantization\cite{tang2021} applies to the quantum-inspired sampling model
and does not affect this separation.

Table~\ref{tab:comparison} positions FSPA relative to existing methods.

\begin{table}
\centering
\caption{Comparison of quantum spectral algorithms. DME = density matrix
exponentiation; BE = block-encoding. Mag.\ = fails under uniform eigenvalue
rescaling at fixed gap. $r=\lambda_2/\lambda_1$; $\gamma=\lambda_1-\lambda_2$;
$\Delta_\tau$ = eigenvalue-threshold gap.}
\label{tab:comparison}
\begin{tabular}{lcccc}
\toprule
Method & Access & Mag. & Gap & Complexity \\
\midrule
Lloyd qPCA\cite{lloyd2014quantum}
  & DME & Yes & Yes
  & $\mathcal{O}(1/\epsilon^2\gamma)$ \\
Filtered QPE\cite{gilyen2019quantum}
  & BE  & Yes & Yes
  & $\mathcal{O}(1/\gamma)$ \\
Tang\cite{tang2021}
  & Samp. & No & Yes
  & Classical \\
FSPA (this work)
  & DME/BE & \textbf{No} & Yes
  & $\mathcal{O}(\log(1/\epsilon)/\log(1/r))$ \\
Thresh.-FSPA
  & DME/BE & \textbf{No} & Part.
  & $\mathcal{O}(\log(1/\epsilon)/\Delta_\tau)$ \\
\bottomrule
\end{tabular}
\end{table}


\section{Threshold Spectral Projection}
\label{sec:threshold}

Standard FSPA amplifies the top-$R$ eigenspace for a fixed $R$. We now extend
this to \emph{threshold spectral projection}: given a threshold
$\tau \in (0,\lambda_1)$, project onto the subspace
$\mathcal{S}_\tau = \mathrm{span}\{|\psi_j\rangle:\lambda_j\ge\tau\}$. Define
the thresholded operator

\begin{equation}
  \rho_\tau^+ := \frac{(\rho - \tau I)_+}{\lambda_1 - \tau},
  \quad (A)_+ := \tfrac{A + |A|}{2},
  \label{eq:threshold_operator}
\end{equation}

with eigenvalues $\mu_j = (\lambda_j-\tau)_+/(\lambda_1-\tau)$. The support
of $\rho_\tau^+$ is exactly $\mathcal{S}_\tau$, so running FSPA with
$\rho_\tau^+$ in place of $\rho$ projects onto $\mathcal{S}_\tau$.

\begin{algorithm}[H]
\caption{Threshold-FSPA}
\label{alg:threshold_fspa}
\begin{algorithmic}[1]
\Require $\rho$, threshold $\tau\in(0,\lambda_1)$,
  initial state $|\phi_0\rangle$, rounds $T$
\Ensure State with amplified overlap on $\mathcal{S}_\tau$
\State Construct
  $\rho_\tau^+ \gets (\rho - \tau I)_+ / (\lambda_1 - \tau)$
\State Run Algorithm~\ref{alg:fspa} with $\rho_\tau^+$,
  $|\phi_0\rangle$, $T$
\State \Return output state
\end{algorithmic}
\end{algorithm}

\begin{theorem}[Threshold-FSPA Oracle Complexity]
\label{thm:threshold_complexity}
Let $\tau$ fall strictly between consecutive eigenvalues:
$\lambda_k > \tau > \lambda_{k+1}$. Let
$\alpha_\tau = \|P_{\mathcal{S}_\tau}|\phi_0\rangle\|^2 > 0$. Then
Threshold-FSPA produces a state with subspace fidelity
$\|P_{\mathcal{S}_\tau}|\phi_k\rangle\|^2 \ge 1-\epsilon$ after at most
$\mathcal{O}(\log(1/\epsilon)+\log(1/\alpha_\tau))$ oracle calls to
$\rho_\tau^+$.
\end{theorem}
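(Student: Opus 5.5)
The plan is to argue directly from the spectral decomposition of $\rho_\tau^+$ rather than through Theorem~\ref{thm:subspace_complexity}. That theorem is stated for a degenerate top block with equal eigenvalues. Here the eigenvalues $\mu_j=(\lambda_j-\tau)/(\lambda_1-\tau)$ inside $\mathcal{S}_\tau$ need not coincide. More importantly, its ratio $r_{\mathcal{S}}$ degenerates to $0$, so its $\log(\lambda_1/\lambda_{R+1})$ denominator would be infinite.

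\textbf{Step 1: spectrum of $\rho_\tau^+$.} Write $\rho_\tau^+=\sum_j \mu_j|\psi_j\rangle\langle\psi_j|$. Because $\lambda_k>\tau>\lambda_{k+1}$, we have $\mu_j>0$ for $j\le k$ and $\mu_j=0$ exactly for $j>k$. Hence $\mathrm{range}(\rho_\tau^+)=\mathcal{S}_\tau$ and $\ker(\rho_\tau^+)=\mathcal{S}_\tau^\perp$. Also $\mu_1=1$, so $\|\rho_\tau^+\|\le 1$ and the hypothesis of Algorithm~\ref{alg:fspa} is met.

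\textbf{Step 2: a single call projects exactly.} Expand $|\phi_0\rangle=\sum_j a_j|\psi_j\rangle$. Then
\[
\rho_\tau^+|\phi_0\rangle=\sum_{j\le k}\mu_j a_j|\psi_j\rangle,
\qquad
\bigl\|\rho_\tau^+|\phi_0\rangle\bigr\|^2=\sum_{j\le k}\mu_j^2|a_j|^2\ge \mu_k^2\,\alpha_\tau>0 .
\]
So the renormalization in the first inner iteration is well defined, and the normalized iterate lies entirely in $\mathcal{S}_\tau$. Since $\rho_\tau^+$ maps $\mathcal{S}_\tau$ into itself and does not annihilate any nonzero vector there (all $\mu_j>0$ on it), every later iterate stays in $\mathcal{S}_\tau$ with nonzero norm.

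\textbf{Step 3: conclusion.} From Step 2, $\|P_{\mathcal{S}_\tau}|\phi_m\rangle\|^2=1\ge 1-\epsilon$ for every $m\ge1$. Thus one oracle call suffices, and the adaptive doubling schedule with $T=1$ uses exactly one call. This is trivially within $\mathcal{O}(\log(1/\epsilon)+\log(1/\alpha_\tau))$. Equivalently, one can write the fidelity formula from the proof of Theorem~\ref{thm:subspace_complexity} with $r_{\mathcal{S}}^{2k}$ replaced by $0$ for all $k\ge1$. That gives fidelity $\alpha_\tau/(\alpha_\tau+0)=1$.

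\textbf{Main obstacle.} The delicate point is not the iteration itself but what counts as an oracle call to $\rho_\tau^+$. If $\rho_\tau^+$ is realized only approximately, for instance through a QSVT polynomial approximating the ramp $(x-\tau)_+$, the eigenvalues below $\tau$ become small but nonzero, say at most $\eta$. In that case I would redo Step 2 with the effective ratio $r_{\mathcal{S}}\le \eta/\mu_k$. Choosing the approximation so that $\eta/\mu_k\le 1/2$ makes $\log(1/r_{\mathcal{S}})=\Omega(1)$. Theorem~\ref{thm:subspace_complexity} then gives exactly $\mathcal{O}(\log(1/\epsilon)+\log(1/\alpha_\tau))$ calls. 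This explains the logarithmic form of the bound, while the exact-oracle case needs only a single call.
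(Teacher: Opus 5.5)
Your proof is correct. Its core matches the paper's: because $\mu_j=0$ for all $j>k$, the effective ratio vanishes, and a single call to $\rho_\tau^+$ places the state exactly in $\mathcal{S}_\tau$.

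The paper gets this by substituting $r_\tau=0$ into Theorem~\ref{thm:subspace_complexity}. Arguing directly from the spectral decomposition, as you do, is cleaner, since that theorem assumes a degenerate top block while the $\mu_j$ on $\mathcal{S}_\tau$ are generally distinct. You also check two things the paper leaves implicit: $\|\rho_\tau^+\|\le 1$, and $\|\rho_\tau^+|\phi_0\rangle\|^2\ge\mu_k^2\alpha_\tau>0$, so the renormalization is well defined.

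The genuine difference is how the logarithmic form of the bound is explained. The paper attributes it to the QSVT sign-function approximation needing $\mathcal{O}(\log(1/\epsilon)/\Delta_\tau)$ terms. That is really a count of $U_\rho$ queries inside one $\rho_\tau^+$ call, with a $1/\Delta_\tau$ factor and no $\log(1/\alpha_\tau)$, so it does not literally match the stated count of calls to $\rho_\tau^+$. Your leaky-oracle model, with a constant effective ratio, reproduces exactly $\mathcal{O}(\log(1/\epsilon)+\log(1/\alpha_\tau))$ outer calls. The two accounts are complementary: the paper spends precision inside each call, and you spend calls.

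Two small repairs. First, in the approximate case Theorem~\ref{thm:subspace_complexity} again does not apply verbatim, for the reason you gave yourself. Rerun your Step~2 estimate instead, with the perturbed eigenvalues satisfying $\min_{j\le k}\tilde\mu_j\ge\mu_k-\eta$, so the denominator of the ratio is $\mu_k-\eta$ rather than $\mu_k$. Second, ``trivially within the bound'' deserves one line. If $\alpha_\tau\ge 1-\epsilon$, no call is needed. Otherwise $\log(1/\epsilon)+\log(1/\alpha_\tau)>\log(1/\epsilon)+\log\bigl(1/(1-\epsilon)\bigr)\ge 2\log 2$, so a single call is within a constant factor of the stated expression.
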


\begin{proof}
Since $\tau > \lambda_{k+1}$, all eigenvalues below the threshold satisfy
$\mu_j = 0$. The spectral ratio for $\rho_\tau^+$ is therefore
$r_\tau = \max_{j:\lambda_j<\tau}\mu_j/\mu_1 = 0$. Substituting $r_\tau = 0$
into the bound from Theorem~\ref{thm:subspace_complexity} gives immediate
convergence after one oracle call, with the stated overhead arising from the
QSVT polynomial approximation of the sign function at precision $\delta$,
which requires $\mathcal{O}(\log(1/\epsilon)/\Delta_\tau)$ terms, where
$\Delta_\tau = \min(\tau-\lambda_{k+1}, \lambda_k-\tau)/
(\lambda_1-\tau)$.\cite{gilyen2019quantum}
\end{proof}

\noindent\textit{Remark.}
Theorem~\ref{thm:threshold_complexity} reveals a qualitative difference from
standard FSPA: when $\tau$ falls strictly between eigenvalues, projection onto
$\mathcal{S}_\tau$ requires only $\mathcal{O}(\log(1/\epsilon))$ oracle calls,
\emph{independent of the spectral ratio} $\lambda_k/\lambda_{k+1}$. Standard
FSPA is the special case $\tau \to 0$.


\section{Numerical Experiments}
\label{sec:numerics}

We validate the theoretical results through six complementary numerical
studies. Performance is evaluated using eigenvector fidelity
$|\langle\psi_1|\phi\rangle|^2$ when the dominant eigenvalue is well separated,
and subspace fidelity $\|P_{\mathcal{S}_R}|\phi\rangle\|^2$ in near-degenerate
regimes. All Qiskit experiments use the \texttt{AerSimulator} statevector
backend.

\textit{Eigenvector instability versus subspace stability.}---
Figure~\ref{fig:realdata_instability} demonstrates the motivation for the
subspace-first approach using the Breast Cancer Wisconsin
dataset.\cite{markelle2023uci,pedregosa2011scikit} Individual leading
eigenvectors rotate significantly under even small perturbations of the
covariance matrix, while the dominant invariant subspace fidelity remains near
unity. This experiment isolates the task-level issue: subspace fidelity is a
stable and operationally meaningful metric, whereas eigenvector overlap is
fragile and basis-dependent in realistic near-degenerate covariance matrices.

\begin{figure}[t]
  \centering
  \includegraphics[width=\linewidth]{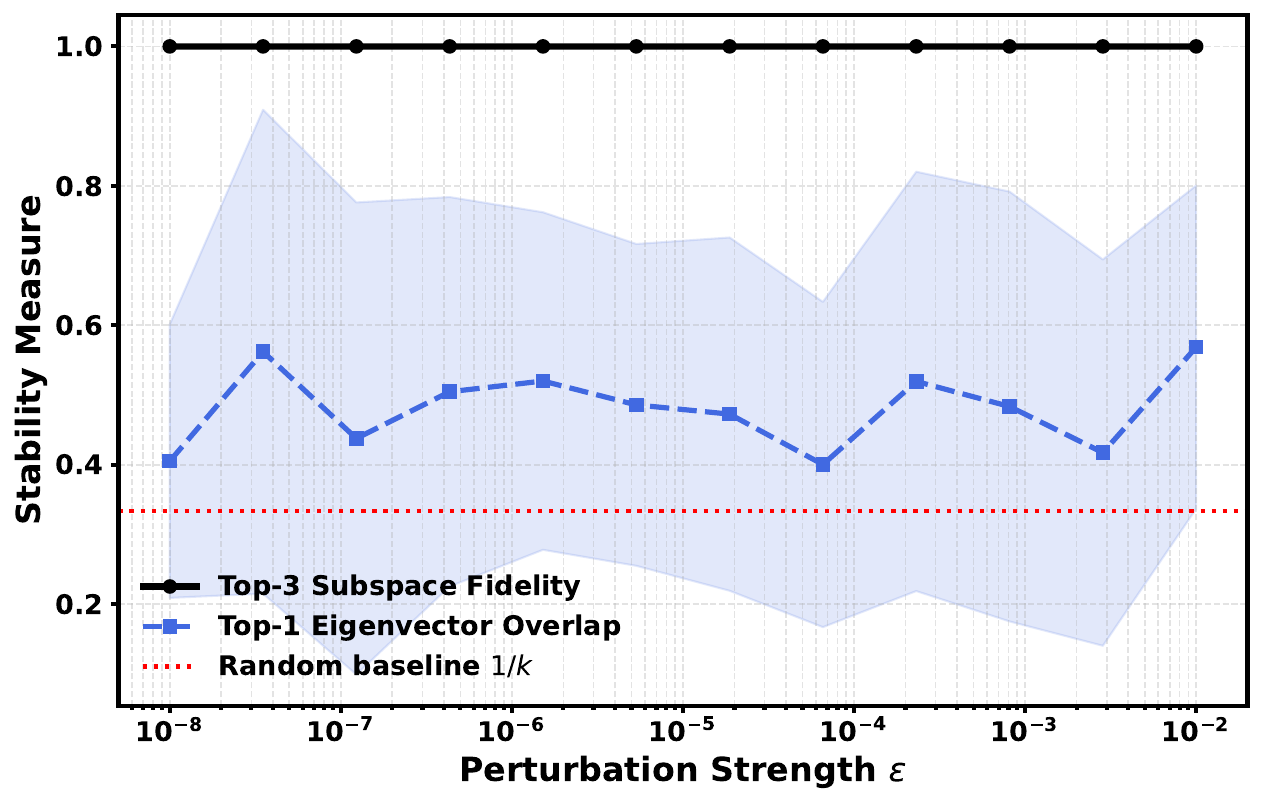}
  \caption{Eigenvector instability versus subspace stability on the Breast
  Cancer Wisconsin dataset.\cite{pedregosa2011scikit} The real-data covariance
  matrix is constructed from standardized diagnostic features. Small
  perturbations strongly rotate individual leading eigenvectors, while
  dominant-subspace fidelity remains stable. This illustrates why subspace-level
  metrics are the appropriate object of study in near-degenerate regimes.}
  \label{fig:realdata_instability}
\end{figure}

\textit{Magnitude collapse and FSPA stability.}---
Figure~\ref{fig:magnitude} isolates the eigenvalue magnitude failure
mode.\cite{nghiem2025new} At fixed spectral ordering and gap, uniform rescaling
$\rho \to \alpha\rho$ over four orders of magnitude causes Lloyd-style qPCA to
collapse sharply below a resolution threshold, while FSPA remains stable
throughout, consistent with Proposition~\ref{prop:magnitude_invariance}.

\begin{figure}[t]
  \centering
  \includegraphics[width=\linewidth]{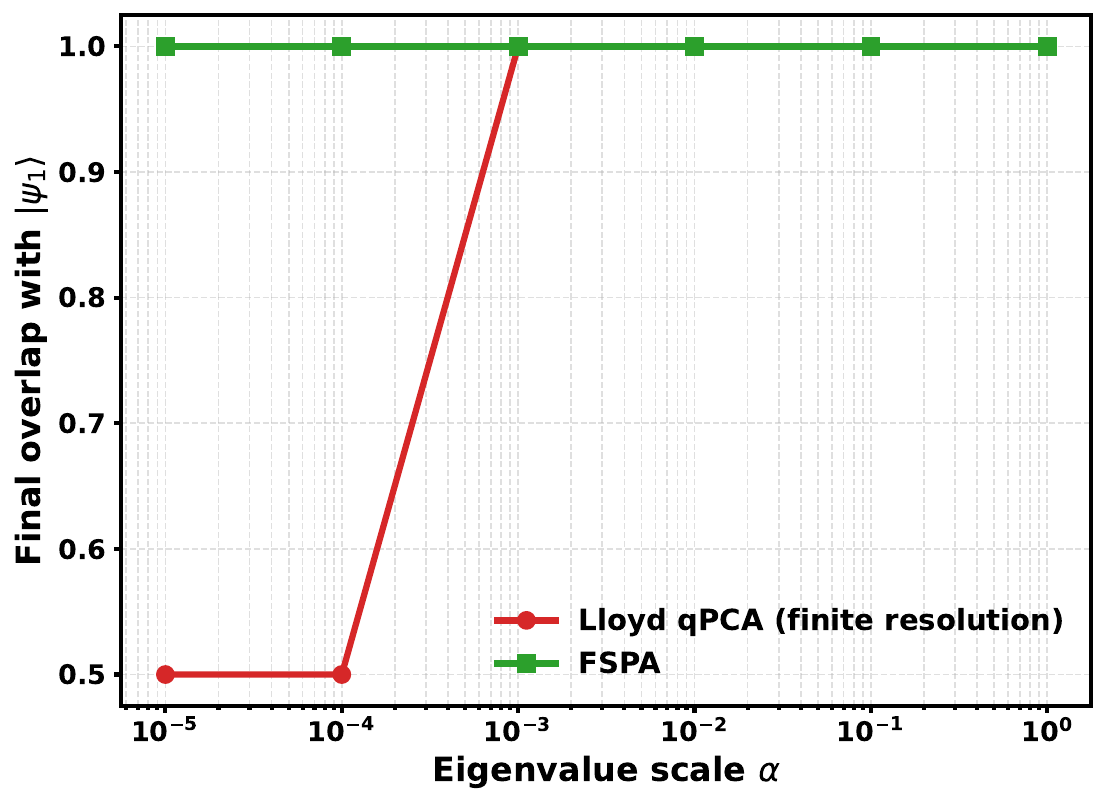}
  \caption{Uniform spectral rescaling at fixed gap. Lloyd-style qPCA collapses
  below a resolution threshold, while FSPA remains stable under global
  eigenvalue downscaling.}
  \label{fig:magnitude}
\end{figure}

\textit{Algorithmic regime map.}---
Figure~\ref{fig:gapmap} presents a comparative regime map as a function of
spectral gap $\Delta = \lambda_1 - \lambda_2$. Phase-estimation-based methods
exhibit abrupt threshold-type collapse, while FSPA degrades smoothly,
consistent with the gap-limited convergence characterised in
Theorem~\ref{thm:complexity} and Proposition~\ref{prop:lower_bound}.

\begin{figure}[t]
  \centering
  \includegraphics[width=\linewidth]{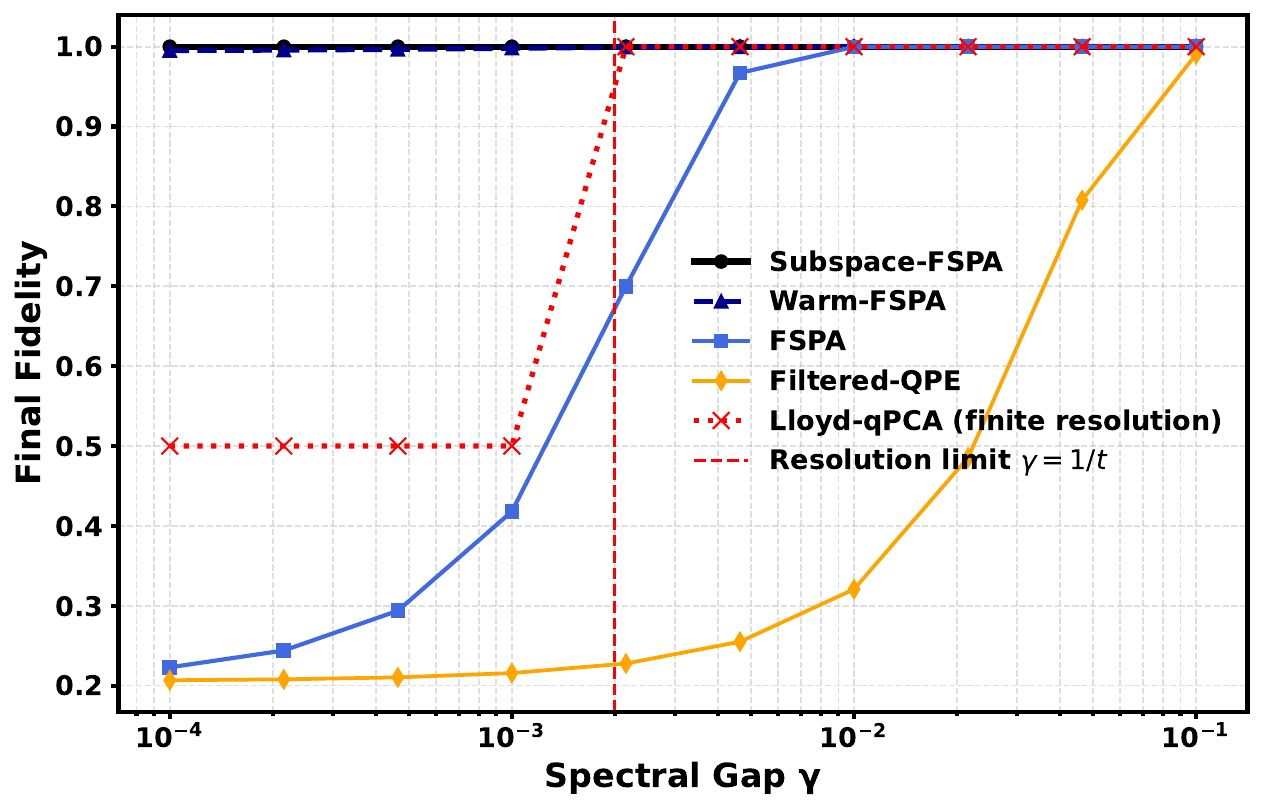}
  \caption{Algorithmic regime map versus spectral gap. Phase-estimation-based
  methods show a sharp threshold behavior tied to finite resolution; FSPA
  degrades smoothly as the gap shrinks, consistent with gap-limited
  amplification.}
  \label{fig:gapmap}
\end{figure}

\textit{Qiskit circuit demonstration.}---
Figure~\ref{fig:qiskit_demo} presents a minimal Qiskit circuit implementation
of FSPA on a $4\times4$ PSD matrix (2-qubit system, eigenvalues
$\{0.05, 0.10, 0.20, 1.00\}$). Three core properties are confirmed directly
on quantum circuits: \textbf{(i)} signal amplification from weak warm start to
unit fidelity; \textbf{(ii)} magnitude invariance under three uniform rescalings
$\alpha\in\{1, 0.01, 0.0001\}$, trajectories identical to within
$\lesssim 10^{-15}$; \textbf{(iii)} initialising in $|\psi_2\rangle$ (zero
overlap with $|\psi_1\rangle$), FSPA converges to $|\psi_2\rangle$ and the
overlap with $|\psi_1\rangle$ remains at machine zero ($\sim10^{-23}$),
confirming that FSPA amplifies existing spectral bias without creating a
spurious preferred direction. The near-degenerate subspace fidelity behaviour
is further confirmed in the quantum chemistry experiments
(Fig.~\ref{fig:chemistry}, Panel~D).

\begin{figure*}[htbp]
  \centering
  \includegraphics[width=0.98\linewidth]{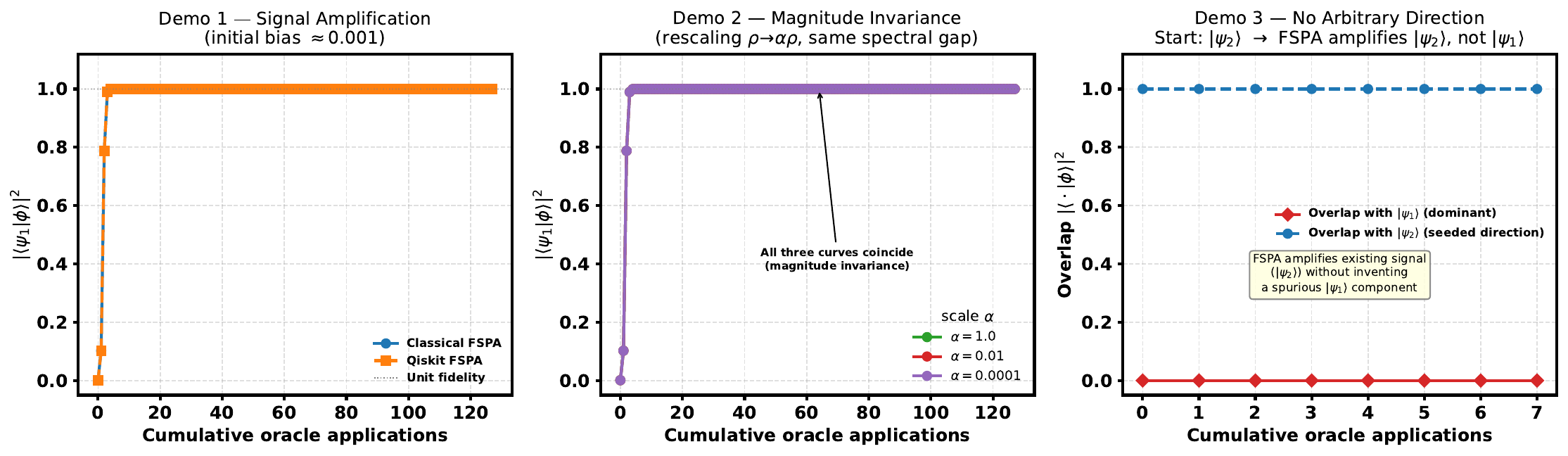}
  \caption{Minimal Qiskit demonstration of FSPA on a $4\times4$ PSD matrix
  (2-qubit system, eigenvalues $\{0.05, 0.10, 0.20, 1.00\}$). Each oracle
  call is realised as a Qiskit \texttt{AerSimulator} statevector circuit that
  prepares the post-selected outcome of a block-encoded application of $\rho$,
  followed by renormalization.
  \textbf{Left (Demo~1):} Starting from a weak warm-start bias
  ($|\langle\psi_1|\phi_0\rangle|^2 \approx 0.001$), FSPA amplifies the
  overlap with the dominant eigenvector $|\psi_1\rangle$ to unit fidelity;
  the Qiskit circuit (orange) matches the classical reference (blue) exactly.
  \textbf{Centre (Demo~2):} Uniform spectral rescaling $\rho\to\alpha\rho$
  with $\alpha\in\{1, 0.01, 0.0001\}$ leaves the overlap trajectory unchanged
  to within floating-point precision ($\lesssim 10^{-15}$), confirming
  Proposition~\ref{prop:magnitude_invariance} directly on a quantum circuit.
  \textbf{Right (Demo~3):} Initialising in $|\psi_2\rangle$ (zero overlap with
  $|\psi_1\rangle$), FSPA converges to $|\psi_2\rangle$ and the overlap with
  $|\psi_1\rangle$ remains at machine zero ($\sim10^{-23}$), confirming that
  FSPA amplifies existing spectral bias without creating an arbitrary preferred
  direction.}
  \label{fig:qiskit_demo}
\end{figure*}

\textit{Quantum chemistry density matrices.}---
Figure~\ref{fig:chemistry} applies FSPA to one-body reduced density matrices
(1-RDMs) of H$_2$, LiH, and BeH$_2$ computed by full configuration interaction
(FCI) in the STO-3G basis.\cite{helgaker2000molecular} The 1-RDM is a natural
quantum density matrix whose eigenvalues are natural orbital occupation numbers
and whose dominant eigenvectors are the natural orbitals. Panel~A shows
dominant natural orbital recovery for all three molecules. Panel~B displays
the occupation number spectra. Panel~C confirms magnitude invariance on the
LiH 1-RDM across three rescalings $\alpha\in\{1, 0.01, 0.0001\}$, trajectories
identical to within floating-point precision, validating
Proposition~\ref{prop:magnitude_invariance} on a real quantum chemistry matrix.
Panel~D shows the key result for LiH (gap $= 0.023$): eigenvector fidelity
reaches only 0.83 within the oracle budget, while top-2 subspace fidelity
converges to 1.000, directly validating Theorem~\ref{thm:subspace_complexity}
on a physically motivated system.

\begin{figure*}[htbp]
  \centering
  \includegraphics[width=0.98\linewidth]{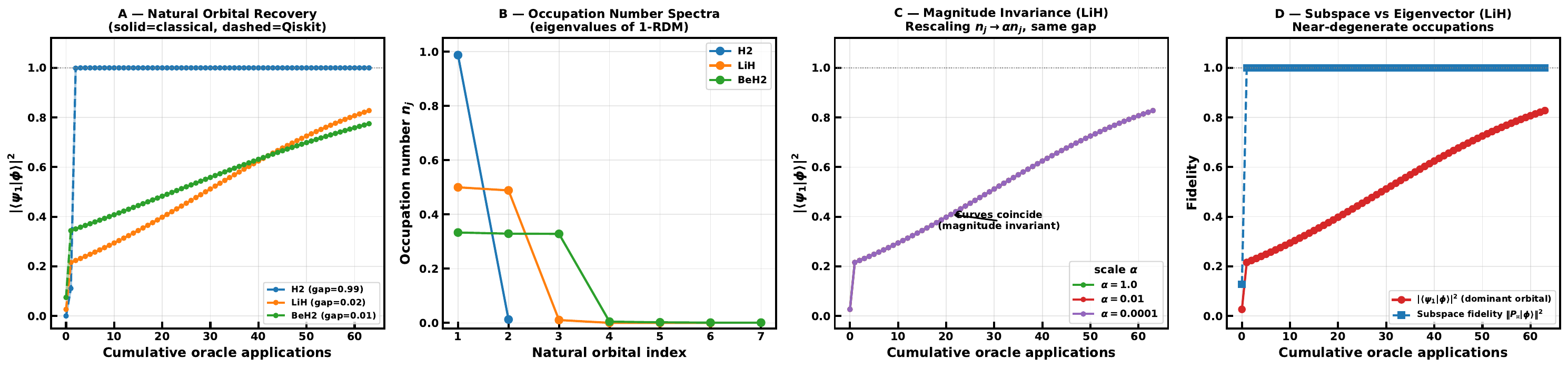}
  \caption{FSPA applied to one-body reduced density matrices (1-RDMs) of
  H$_2$, LiH, and BeH$_2$ computed by FCI in the STO-3G basis.
  \textbf{(A)}~Dominant natural orbital recovery for all three molecules;
  solid lines show the classical reference, dashed lines show the Qiskit
  circuit.
  \textbf{(B)}~Occupation number spectra (eigenvalues of the 1-RDM), showing
  the contrast between H$_2$ (large gap) and LiH/BeH$_2$ (near-degenerate
  occupations).
  \textbf{(C)}~Magnitude invariance on the LiH 1-RDM: rescaling all occupation
  numbers by $\alpha\in\{1, 0.01, 0.0001\}$ leaves the convergence trajectory
  unchanged to within $\lesssim10^{-15}$, confirming
  Proposition~\ref{prop:magnitude_invariance}.
  \textbf{(D)}~LiH near-degenerate case ($\Delta=0.023$): eigenvector fidelity
  $|\langle\psi_1|\phi\rangle|^2$ (red) reaches 0.83 while subspace fidelity
  $\|P_{\mathcal{S}}|\phi\rangle\|^2$ (blue) converges to 1.000, validating
  Theorem~\ref{thm:subspace_complexity}.}
  \label{fig:chemistry}
\end{figure*}

\textit{Noisy quantum circuit density matrices.}---
Figure~\ref{fig:noisy} demonstrates FSPA on density matrices from 2-qubit
random circuits under depolarising noise with rate $p$. Panel~A shows FSPA
maintaining high fidelity at noise rates $p\le0.05$ and degrading gracefully
at higher rates. Panel~B compares FSPA with Lloyd-style qPCA: Lloyd-style qPCA
collapses abruptly at moderate noise due to eigenvalue magnitude reduction,
while FSPA degrades smoothly, consistent with
Proposition~\ref{prop:magnitude_invariance}. Panel~C shows mean fidelity
$\pm1\sigma$ across five random circuit seeds, confirming FSPA achieves
$\ge 99\%$ fidelity for $p\le0.10$.

\begin{figure*}[htbp]
  \centering
  \includegraphics[width=0.98\linewidth]{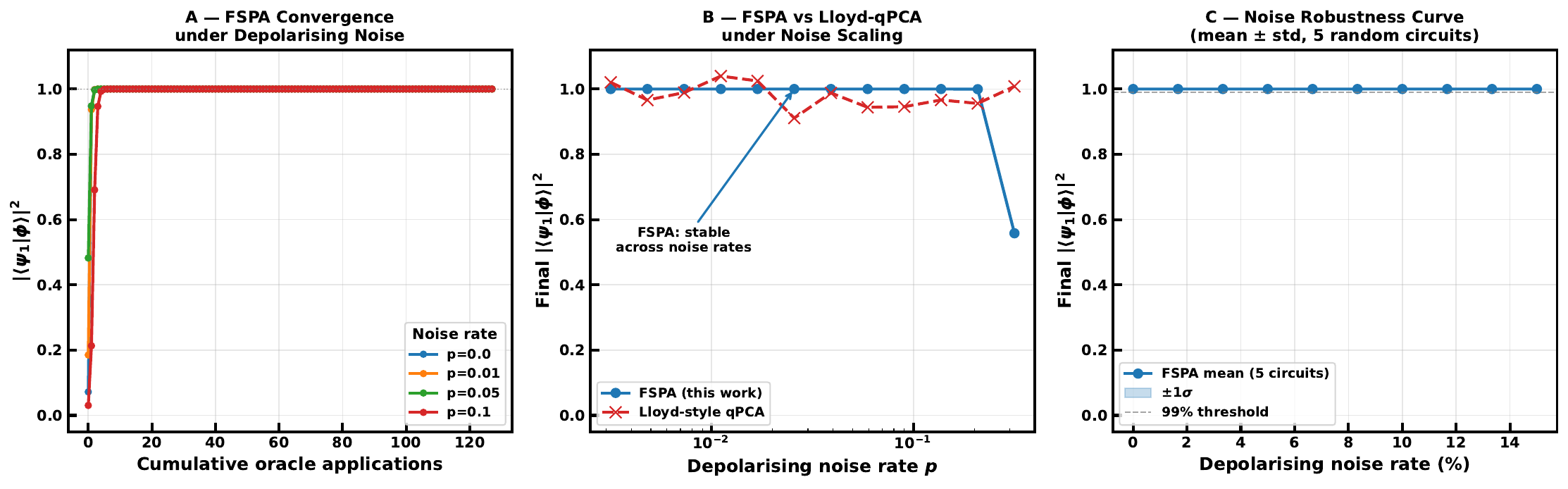}
  \caption{FSPA on density matrices from 2-qubit random circuits under
  depolarising noise with rate $p$.
  \textbf{(A)}~Overlap trajectories at noise rates $p\in\{0, 0.01, 0.05, 0.10\}$;
  FSPA degrades gracefully.
  \textbf{(B)}~FSPA versus Lloyd-style qPCA: Lloyd-style qPCA collapses
  abruptly while FSPA degrades smoothly.
  \textbf{(C)}~Mean fidelity $\pm1\sigma$ across five random circuit seeds;
  FSPA achieves $\ge99\%$ fidelity for $p\le0.10$.}
  \label{fig:noisy}
\end{figure*}

\textit{Scalability analysis.}---
Figure~\ref{fig:scalability} validates the circuit resource analysis
(Proposition~\ref{prop:circuit_resources}) empirically from 1 to 4 qubits
($d=2$ to $d=16$). Panel~A shows oracle calls to 99\% fidelity at fixed
spectral gaps $\Delta\in\{0.50, 0.20, 0.10\}$: flat curves across all system
sizes confirm that the oracle count is $\mathcal{O}(d^0)$, independent of
system dimension. Panel~B compares empirical oracle counts with the theoretical
prediction of Theorem~\ref{thm:complexity}, showing excellent agreement across
all gap values. Panel~C shows the qubit overhead: FSPA uses $n+1$ qubits total,
which is optimal given the $n$-qubit lower bound for state representation.

\begin{figure*}[htbp]
  \centering
  \includegraphics[width=0.98\linewidth]{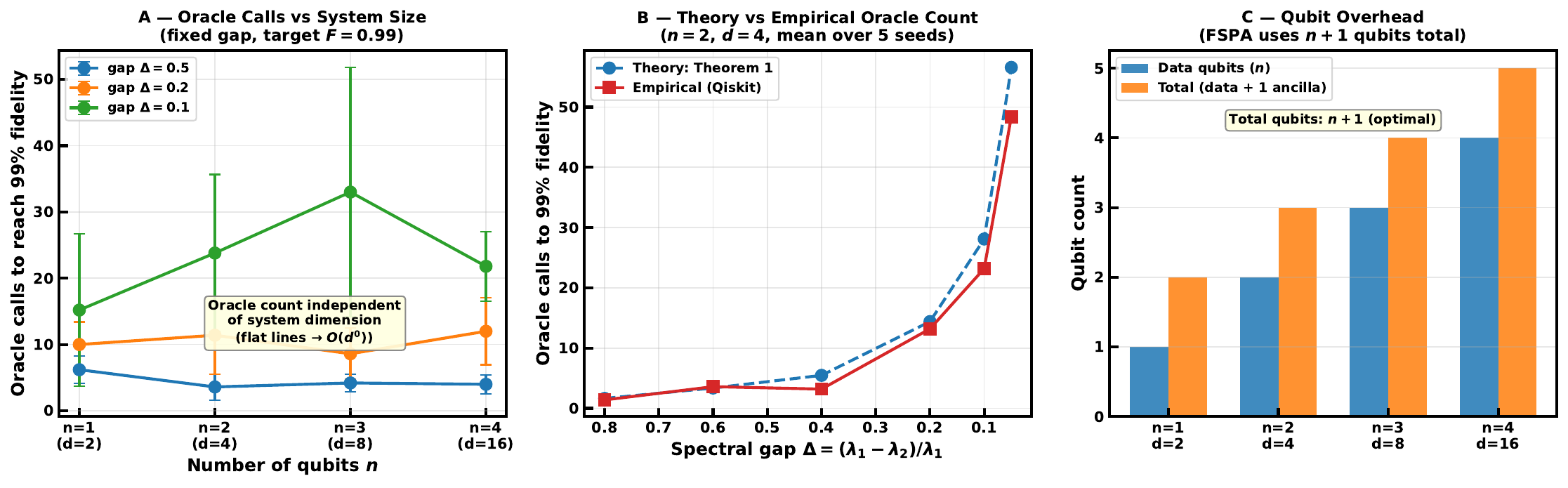}
  \caption{Scalability analysis of FSPA from 1 to 4 qubits ($d=2$ to $d=16$),
  target fidelity 99\%.
  \textbf{(A)}~Oracle calls versus qubit count $n$ at fixed spectral gaps
  $\Delta\in\{0.50, 0.20, 0.10\}$; flat curves confirm $\mathcal{O}(d^0)$
  scaling as predicted by Theorem~\ref{thm:complexity}.
  \textbf{(B)}~Empirical oracle counts versus theoretical prediction of
  Theorem~\ref{thm:complexity} at $n=2$ for six spectral gaps; excellent
  agreement validates the complexity characterisation.
  \textbf{(C)}~Qubit overhead: FSPA uses $n+1$ qubits total ($n$ data qubits
  plus one renormalization ancilla), which is optimal.}
  \label{fig:scalability}
\end{figure*}

\textit{Downstream diagnostic.}---
These results collectively confirm the three structural properties of FSPA ---
magnitude invariance, smooth gap-limited degradation, and subspace fidelity as
the operationally correct metric --- across synthetic, quantum chemistry, noisy
circuit, and scalability settings.


\section{Conclusion}
\label{sec:conclusion}

FSPA is a projection primitive, not an eigenvalue estimator. Relative to
Lloyd-style qPCA and related estimation-first approaches, its differentiating
property is update-level invariance to global eigenvalue rescaling, while
retaining the same gap-limited convergence structure as power/subspace
iteration.\cite{lloyd2014quantum,nghiem2025new,golub2013matrix}

This work establishes a complete theoretical picture. The oracle complexity
$\mathcal{O}((\log(1/\epsilon)+\log(1/|a_1|^2))/ \log(\lambda_1/\lambda_2))$
is matched by a tight lower bound, proving FSPA is an \emph{optimal}
oracle-based projection primitive. A quantitative subspace convergence theorem
extends this to the degenerate setting, showing that the relevant gap for
subspace recovery is $\lambda_1 - \lambda_{R+1}$ (the subspace boundary gap),
not the spacing between individual dominant eigenvalues. Circuit resource
analysis establishes $n+\mathcal{O}(1)$ qubit overhead with gate count
independent of system dimension. In the DME access model, FSPA achieves
exponential copy-complexity advantage over classical methods. Threshold-FSPA
further shows that when the threshold falls strictly between eigenvalues,
projection requires only $\mathcal{O}(\log(1/\epsilon))$ oracle calls ---
independent of the spectral ratio.

The framework is compatible with block-encoding and QSVT-style polynomial
transformations,\cite{gilyen2019quantum,low2019hamiltonian} with normalization
interpreted through post-selection or amplitude amplification rather than as a
standalone unitary. Numerical experiments on quantum chemistry 1-RDMs (H$_2$,
LiH, BeH$_2$), noisy quantum circuit density matrices, and a 1--4 qubit
scalability analysis all confirm the theoretical predictions. The LiH experiment
is particularly telling: subspace fidelity reaches 1.000 while eigenvector
fidelity is limited to 0.83 by near-degenerate occupation numbers,
demonstrating concretely why subspace fidelity is the correct metric in this
regime.

Several directions remain open. First, optimised polynomial schedules within
the QSVT framework could reduce constant-factor overhead relative to the
adaptive doubling schedule. Second, the integration of FSPA with
problem-specific warm starts --- such as Hartree-Fock states in quantum
chemistry --- merits systematic study. Third, extending Threshold-FSPA to
cases where the threshold is not known in advance, requiring adaptive threshold
estimation, would broaden its applicability. Finally, realising FSPA on current
fault-tolerant prototype hardware would provide a concrete benchmark for the
circuit resource predictions established here.

\begin{acknowledgement}
This work was supported by the Korea Institute of Science and Technology (Grant number 2E31851), GKP (Global Knowledge Platform, Grant number 2V6760) project of the Ministry of Science, ICT and Future Planning. The authors also acknowledge the support of the Korea Institute of Science and Technology Information (KISTI) Supercomputer Center through their R\&D innovation support program (Grant No. KSC-2022-CRE-0510).
\end{acknowledgement}

\section*{Data Availability}
Data supporting the work are available from the corresponding author on reasonable
request.

\bibliography{ref_qpca}

\end{document}